\newcommand{\mb}{\mathbf}
\newtheorem{thm}{\bf Theorem}
\newtheorem{defn}{Definition}
\renewcommand{\arraystretch}{0.5}
\newcommand{\our}{MeGuide}
\newcommand{\ripplewalk}{RWT}
\newcommand{\feature}{\textit{Feature Smoothness}}
\newcommand{\distance}{\textit{Connection Failure Distance}}
\newcommand{\gcn}{\text{GCN}}
\newcommand{\gat}{\text{GAT}}
\newcommand{\gnn}{\text{GNNs}}
\title{Measuring and Sampling: A Metric-guided Subgraph Learning Framework for Graph Neural Network}
\author[1\authfn{1}]{Jiyang Bai}
\author[2\authfn{1}]{Yuxiang Ren}
\author[3]{Jiawei Zhang}
\affil[1]{Department of Computer Science, Florida State University, Tallahassee, Florida, 32306, USA. Email: bai@cs.fsu.edu}
\affil[2]{IFM Lab, Department of Computer Science, Florida State University, Tallahassee, Florida, 32306, USA. Email: yuxiang@ifmlab.org}
\affil[3]{IFM Lab, Department of Computer Science, University of California, Davis, Davis, California, 95616, USA. Email: jiawei@ifmlab.org}
\runningauthor{Bai \& Ren et al.}
\begin{document}

\begin{frontmatter}
\maketitle

\begin{abstract}
Graph neural network (GNN) has shown convincing performance in learning powerful node representations that preserve both node attributes and graph structural information. However, many GNNs encounter problems in effectiveness and efficiency when they are designed with a deeper network structure or handle large-sized graphs. Several sampling algorithms have been proposed for improving and accelerating the training of GNNs, yet they ignore understanding the source of GNN performance gain. The measurement of information within graph data can help the sampling algorithms to keep high-value information while removing redundant information and even noise. In this paper, we propose a \textbf{Me}tric-\textbf{Guide}d ({\our}) subgraph learning framework for GNNs. {\our} employs two novel metrics: {\feature} and {\distance} to guide the subgraph sampling and mini-batch based training. {\feature} is designed for analyzing the feature of nodes in order to retain the most valuable information, while {\distance} can measure the structural information to control the size of subgraphs. We demonstrate the effectiveness and efficiency of {\our} in training various GNNs on multiple datasets.

\keywords{Graph Neural Network; Subgraph Sampling; Training Optimization;}
\end{abstract}
\end{frontmatter}

\section{Introduction}
\label{sec:intro}
In recent years, graph neural networks (GNNs) have progressed in graph representation learning. Numerous real-world graph-related applications, such as social media~\cite{JZhang19}, fake news detection~\cite{ren2020hgat} and knowledge graphs~\cite{WMWG17}, exhibit the favorable property of graph neural networks. The core idea of GNN is to aggregate the feature information of the nodes' neighbors through neural networks to update node representations, which combine both the independent information of the nodes and corresponding structural information. As the scale of the graph increases and the GNN model architecture goes deeper, several challenging problems will emerge in learning GNNs: \textit{neighbors explosion}, \textit{node dependence}, and \textit{oversmoothing}~\cite{bai2020ripple}. GNNs learn high-level representations through a recursive neighbors aggregation scheme~\cite{XHLJ10}, which causes the number of neighbors to explode. This problem is described as \textit{neighbors explosion}~\cite{bai2020ripple}, which leads to exponentially-growing computation complexity. Because neighboring nodes are interdependent in the learning process, most current GNNs have to work based on the full graph. This limitation is the \textit{node dependence} as described in~\cite{chiang2019cluster}, which brings out serious memory and computation bottlenecks in handling large-sized graphs. At last, when GNNs go deeper and learn on the full graph, node representations from different clusters will be aggregated~\cite{zhao2019pairnorm}, which contradicts the smoothness assumptions (close nodes are similar). This \textit{oversmoothing} issue can lead to learned node representations indistinguishable~\cite{li2018deeper,rong2019dropedge}. 

Several sampling-based methods have been proposed to deal with the aforementioned problems. Among them, one direction is the node-sampling method~\cite{hamilton2017inductive, chen2018fastgcn, chen2017stochastic, zou2019layer, rong2019dropedge}. GraphSAGE~\cite{hamilton2017inductive} samples features from local node neighborhoods to learn a function that updates node representations. The neighborhoods sampling is able to mitigate the harassment from \textit{neighbors explosion} and \textit{node dependence}. DropEdge~\cite{rong2019dropedge} randomly removes some edges from the input graphs, which is a kind of node-sampling essentially, to overcome the \textit{oversmoothing}. However, for large-sized graphs, even if neighboring nodes are sampled, it is still difficult to avoid loading the full graph during training, which brings a lot of memory overhead. At the same time, the node-sampling methods focus on sampling distribution (e.g., importance sampling~\cite{zou2019layer}) to reduce the high-level approximation variance, but this kind of sampling cannot precisely quantify the information gain from each node. 

The other direction is the graph-sampling method~\cite{chiang2019cluster,zeng2019graphsaint, bai2020ripple}, which can deal with the memory overhead of loading the full graph. Cluster-GCN~\cite{chiang2019cluster} constructs the subgraphs mini-batch by clustering on the full graph to train the GCN. However, the clustering algorithm itself is greatly affected by the structure of the full graph, that is, the size of the clustered subgraph is uncontrollable. As a heuristic method, Cluster-GCN is difficult to guarantee the generalization performance of graph data with different structures. Unlike Cluster-GCN, GraphSAINT~\cite{zeng2019graphsaint} constructs mini-batches by sampling subgraphs with the support of several random samplers. RippleWalk~\cite{bai2020ripple} proposes a training framework together with the ripple walk sampler to consider randomness and connectivity of sampled subgraphs. Graph-sampling methods using subgraphs to train GNNs can handle the three problems mentioned above simultaneously to a certain extent but only focus on the training phase of GNN models. GraphSAINT and RippleWalk still need to load the full graph when using the trained model to make predictions, which challenges the memory space. What is more, current graph-sampling methods along with their samplers are limited in understanding the source of information gains and noise when GNNs are trained. In fact, the understanding of information gain and noise can effectively help graph-sampling methods to obtain higher quality subgraphs that are the key to the training performance.

In this paper, we propose a general learning framework, namely \textbf{Me}tric-\textbf{Guide}d ({\our}), for graph neural networks. {\our} is a mini-batch based learning framework, which can help GNNs learn on the sampled subgraphs instead of the full graph. The full GNN is trained and updated based on the mini-batch gradient. {\our} employs a novel Metric-Guided Sampler to sample subgraphs for the mini-batch, whose sampling logic is mainly based on the two novel metrics we introduce in this paper: {\feature} and {\distance}. In subgraph sampling, the node selection and the size of the subgraph are two important factors that determine the shape of the subgraph. For node selection, the existing methods~\cite{rong2019dropedge,zeng2019graphsaint} randomly sample and ignore the information provided by node feature vectors. {\feature} measures the information gain of the node feature vectors, which is used by Metric-Guided Sampler to select nodes with more information and drop nodes with redundant information during the subgraph sampling process. In addition, according to the basic assumption on the graph structure data, the closer the nodes are, the more similar and the more likely they are to have the same label. It's reasonable to consider that neighbors with different labels contribute to negative disturbance~\cite{hou2019measuring}. When the distance between nodes exceeds a certain metric, there is a greater probability for the sampled nodes to have different labels. This metric we define in this paper is {\distance}, which is utilized by {\our} Sampler to control the longest multi-hop connection in subgraphs. In this way, {\our} can help GNNs avoid unexpected aggregations (e.g., aggregating nodes with different labels), which is the primary cause of \textit{oversmoothing}~\cite{bai2020ripple}. To a certain extent, the size of subgraphs can also be determined, but in other sampling methods, it is a hyper-parameter. With the understanding of information gains and disturbance during sampling subgraphs, {\our} can help the training of GNN models to achieve better performance in both effectiveness and efficiency. In addition, for how to apply trained GNN models to perform prediction tasks based on subgraphs, {\our} proposes a representation aggregation-based scheme so that it is no longer necessary to load the full graph in the prediction phase.

The contributions of our work are summarized as follows:
\begin{itemize}
	\item We define two metrics {\feature} and {\distance} based on the smoothness of node features and the connectivity of graphs respectively to measure the quantity and quality of information gain between nodes.  
	
	\item We propose a general learning framework {\our} for different GNN models. {\our} samples high-value subgraphs guided by {\feature} and {\distance} for training GNN models. 
	
	\item For the case of the memory bottleneck when using learned GNN models to make predictions on a single large graph, {\our} employs the representation aggregation-based prediction on subgraphs, which is an unconsidered problem left by existing subgraph-based training methods.
	
	\item We conduct extensive experiments on 5 benchmark graph datasets with different sizes to demonstrate both the effectiveness and efficiency of {\our}. The results show the superiority of {\our} subject to the training efficiency and the performance of prediction.
\end{itemize}

The remaining paper is organized as follows. We review the related works in Section~\ref{sec:relate}. Then we introduce preliminaries and background in Section~\ref{sec:preliminaries}. The proposed framework {\our} is introduced in Section~\ref{sec:solution}, whose effectiveness is evaluated in Section~\ref{sec:experiment}. Finally, we conclude this paper in Section~\ref{sec:conclusion}.

\section{Related Work}
\label{sec:relate}

\subsection{Graph neural network}
Graph neural networks (GNN) aim at the machine learning tasks involving graph-structured data. Bruna et al.~\cite{bruna2013spectral} express the idea of graph information construction based on the theorem of the spectrum of the graph Laplacian. Later, spatial-based GNNs~\cite{gat,monti2017geometric} define graph convolutions directly based on a node's spatial relations. Different from the spectral-based {\gnn}, where the weights of edges in the graph have been determined before the training, the connections (edges) between nodes and the weights of connections can be automatically learned in the training process. For example, \cite{gat} applies the attention mechanism to learn the attention weights for edges in each training epoch. Nonetheless, both the spectral-based and spatial-based {\gnn} can be regarded as an information propagation-aggregation mechanism, and such mechanism is achieved by the connections and multi-layers structure. 
As a popular research topic, there are already many graph neural networks ~\cite{kipf2016semi,hamilton2017inductive,xinyi2018capsule, sun2019infograph,ying2018hierarchical,ren2021label} having shown awe-inspiring capabilities in handling graph structure data. More progress on graph neural networks can be referred to the surveys~\cite{zhou2020graph,wu2020comprehensive}.

\subsection{Optimization in GNN}
Many GNN models are limited by the problems from three aspects: \textit{node dependence}, \textit{neighbors explosion}, and \textit{oversmoothing}. In response to these problems, some related work has been proposed from different directions.
\subsubsection{Node dependence}
\textit{Node dependence}~\cite{chiang2019cluster} forces GNNs to be trained on the entire graph, which leads to the slow training process. More specifically, the information aggregation involves the adjacency matrix of the full graph in each training epoch. To deal with such problem, The works~\cite{chiang2019cluster, zeng2019graphsaint} apply the concept of subgraph training methods. The essence of subgraph training is to collect a batch of subgraphs from the full graph and use them during the training process. The subgraph collecting strategies can be various within different methods. Chiang et al.~\cite{chiang2019cluster} divide the full graph into subgraphs according to the clustering results. Training the GCN by clustered subgraphs can avoid unexpected aggregations from different clusters, but the size of clustered subgraphs is uncontrollable. More importantly, the subgraphs sampled by clustering share no joint node, thus the connection information within the full graph will be partly discarded. Another work in \cite{zeng2019graphsaint} applies several subgraph sampling ideas (e.g., on the node, edge, random walk) when training the GCN for inductive tasks. There are also other approaches to optimize the {\gnn} frameworks. \cite{defferrard2016convolutional,kipf2016semi,levie2018cayleynets,liao2019lanczosnet} optimize the localized filter in order to reduce the time cost of training on the full graph. Further, \cite{henaff2015deep,li2018adaptive} reduce the number of learnable parameters by dimensionality reduction and residual graph laplacian respectively. But these approaches do not alleviate the space complexity problem. 

\subsubsection{Neighbors expansion}
\textit{Neighbors expansion} makes deep GNNs difficult to being implemented. Because learning a single node requires embeddings from its neighbors, and the quantity may be explosive when a GNN goes deeper. 
Some research works deal with \textit{neighbors explosion} by neighbors sampling~\cite{hamilton2017inductive,chen2018fastgcn,ying2018graph,chen2017stochastic}. GraphSAGE~\cite{hamilton2017inductive} proposes to sample neighbors when aggregating information for every node. FastGCN~\cite{chen2018fastgcn} regards the neighbors following specific distribution, and then does neighbor sampling on the distribution level. VR-GCN~\cite{chen2017stochastic} conducts the neighbors sampling with variance reduction for each node. In other directions, several models~\cite{gao2018large,xu2018representation,lee2019graph} select specific neighbors based on defined metrics to avoid the explosive quantity. In~\cite{klicpera2019predict,haonan2019graph,abu2019mixhop,chen2020scalable}, the propagation-aggregation mechanism is optimized to enable the node to capture long-distance information even with a relatively shallow structure. \cite{rong2019dropedge} randomly remove edges from input graphs to handle the \textit{neighbor explosion}. The above related works all focus on the neighbor-level sampling but still have the same space complexity with original {\gnn}.

\subsubsection{Oversmoothing}
The problem of \textit{oversmoothing} in the GCN was introduced in~\cite{li2018deeper}. When GNNs go deep, the performance suffers from \textit{oversmoothing} where node representations from different clusters become mixed up~\cite{zhao2019pairnorm}. The node information propagation-aggregation mechanism can be regarded as one type of random walk within the graph. With the increasing of walking steps, the representations of nodes will finally converge to stable status. Such convergence would impede the performance of {\gnn} and make the nodes indistinguishable in the downstream tasks. Some related works have been proposed to deal with the \textit{oversmmothing}. \cite{gresnet} comes up with the suspended animation and utilizes the residual networks to mine the advantages of deeper networks. In such a case, the depth of GNN models can reach more than fifty with a better performance. 

Till now, the optimized methods based on subgraph learning are limited~\cite{zeng2019graphsaint,chiang2019cluster,bai2020ripple}. Their measurement of the quality of sampled subgraphs is not comprehensive enough, and the sampling of subgraphs is precisely the key to this type of method. In this paper, we propose a subgraph learning framework {\our} for GNNs, which employs two novel metrics: {\feature} and {\distance} to guide the subgraph sampling and mini-batch based training. High-quality subgraphs can ultimately help {\our} to better empower different GNNs learning.

\section{Preliminaries and Background}
\label{sec:preliminaries}
In this section, we first introduce the preliminaries about general GNN models. Then we elaborate the basic idea of subgraph-based training for GNN models.
\subsection{General GNN models}
We denote a graph as $\mathcal{G} = (\mathcal{V}, \mathcal{E})$, where $\mathcal{V}$ and $\mathcal{E}$ represent the set of nodes and edges of $\mathcal{G}$ respectively. Most widely used GNN mdoels (e,g, GAT~\cite{gat}, GCN~\cite{kipf2016semi}, GIN~\cite{XHLJ10}) follow the recursive neighbors aggregation scheme to update the representation for each node. For node $v_i$, we use $\mathcal{N}_{v_i} = \{v_j: e_{v_i, v_j}\in\mathcal{E}\}$ to represent the set of its neighbors, where $e_{v_i, v_j}$ denotes the edge between $v_i$ and $v_j$. Each node has an initial feature vector $x_v \in \mathbb{R}^d$ with dimension $d$. The initial feature matrix $\mathbf{X} \in \mathbb{R}^{|\mathcal{V}|\times d}$ is consitute of all nodes' initial feature vectors. The neighbors aggregation scheme of GNN models can be represented generally as:   
\begin{equation}\label{equ:aggre}
	\begin{aligned}
		&\mb{H}^{(0)} = \mb{X}\\
		&\mb{H}^{(l+1)}_{v_i} = \sigma(\sum_{j\in {\mathcal N}_{v_i}} \mb{\alpha}_{ij}\cdot \mb{H}^{(l)}_{v_j}\mb{W}^{(l)})
	\end{aligned}
\end{equation}
Here, $\mb{H}^{(0)}$ is the input feature matrix of a GNN model, and the $\mb{H}^{(l)}_{v_j}$ is the hidden representation of node $v_j$ in the $l_{th}$ layer; $\sigma$ is an activation function; $\mb{W}^{(l)}$ is the learnable parameter matrix for linear transformation; $\mb{\alpha}$ is a variant of cofficient matrix, which has different definitions according to different GNN models. For example, in GCN~\cite{kipf2016semi}, $\mb{\alpha} = \widetilde{\mb{A}}$ is the normalized adjacency matrix. When the GNN model is GAT~\cite{gat}, $\mb{\alpha}$ is the attention weights matrix learned in current round. Through the feedforward layer computing, the hidden representation of node $v_i$ is updated by aggregating its current representation and neighbors' hidden representations. With the support of a mapping function (e.g. a fully-connected layer), the learned representation can serve for downstream tasks such as node classification.

\subsection{Subgraph-based training for GNN models}
The learning scheme shown in Equation~\ref{equ:aggre} needs to take the full graph $\mathcal{G}$ as input. Training GNN models with the full graph, especially facing large-sized graphs, may easily lead to aforementioned three problems:\textit{neighbors explosion}, \textit{node dependence}, and \textit{oversmoothing}~\cite{bai2020ripple}. To solve these problems, subgraph-based training methods~\cite{bai2020ripple,zeng2019graphsaint} employ subgraphs of $\mathcal{G}$ to construct the mini-batch in each training iteration and update the complete GNN models based on the mini-batch gradient. We use the $\mathcal{G}_t = (\mathcal{V}_t, \mathcal{E}_t)$ to denote a subgraph of $\mathcal{G}$, where $\mathcal{V}_t\subseteq \mathcal{V}$ and $\mathcal{E}_t \subseteq \mathcal{E}$; In this way, the neighbor aggregation procedure of GNN models when training with the subgraph $\mathcal{G}_t$ can be represented as:
\begin{equation}\label{equ:sub_aggre}
	\begin{aligned}
		&\mb{H}^{(0)} = \mb{X}_{\mathcal{G}_t}\\
		&\mb{H}^{(l+1)}_{v_i} = \sigma(\sum_{j\in \mathcal{N}^{\mathcal{G}_t}_{v_i}} \mb{\alpha}^{\mathcal{G}_t}_{ij}\cdot \mb{H}^{(l)}_{v_j}\mb{W}^{(l)})
	\end{aligned}
\end{equation}

Here, $\mathcal{N}^{\mathcal{G}_t}_{v_i}$ is the neighbor nodes set of node $v_i$ in $\mathcal{G}_t$; $\mb{\alpha}^{\mathcal{G}_t}$ corresponds to the cofficient matrix of $\mathcal{G}_t$.
The aggregated representations of nodes in $\mathcal{G}_t$ are used to calculate the loss and gradients in order to update the complete GNN models.

However, different from previous data types such as the image using mini-batch gradient descent, the nodes in a graph are not independent from each other. In this way, the subgraph is equivalent to dropping some edges, which may lead to losing dependency information (connections). For this concern, RippleWalk~\cite{bai2020ripple} provides a theoretical analysis in Theorem 1 and 2, which prove that the subgraph-based mini-batch gradient descent is still reliable for optimizing GNN models. However, the quality of subgraphs will have a great impact on the training performance, and this is also the problem we try to solve in the paper: sampling more effective subgraphs for the learning of GNN models.

\section{Proposed Methodology}\label{sec:solution}
In this section, we first introduce two metrics {\feature} and {\distance} that are the keys to sampling effective subgraphs. Then we propose the metric-guided sampling method, and follow it up by presenting the subgraph-based training and representation aggregation-based prediction of the proposed {\our} framework. 

\subsection{Subgraph Sampling Metrics}\label{sec:metrics}
The neighbors aggregation scheme works to collect and aggregate neighboring information to update node representations. The neighboring relationship on a graph can indicate the closeness among nodes to a certain extent. We analyze the neighboring information aggregation process from the perspective of information gain~\cite{kullback1951information}. When neighboring nodes $\mathcal{N}_{v_i}$ and $v_i$ are too similar, they cannot bring much information gain to the final aggregated representation of $v_i$. At the same time, the neighbors aggregation scheme will aggregate the multi-hop neighboring nodes with more stacked
convolution layers. When the hop distance is too long, node representations from different clusters become mixed up~\cite{zhao2019pairnorm}, which is the primary reason of \textit{oversmoothing}. Therefore, the hop distance should be related to the effectiveness of information gain during the neighbors aggregation process.

Based on the above two analyses, we define {\feature} and {\distance} to measure the quantity and quality of information gain in the aggregation process, respectively. These two metrics can guide the sampling method to obtain subgraphs that can bring high-quality information gain for the learning of GNN models.

\subsubsection{Feature Smoothness}\label{sec:featuresmooth}   
In order to quantify the information obtained from neighboring nodes, we use Kullback-Leibler divergence to measure the information gain between two connected nodes.


\begin{defn}\label{def:info_gain}
	(Information Gain between Connected Nodes): Given a graph $\mathcal{G} = (\mathcal{V}, \mathcal{E})$, for each node $v\in \mathcal{V}$ and its representation follows the distribution $Q$; the node $v'\in \mathcal{N}_{v}$, and the aggregated representation of $v$ from $v'$ (denoted as $AGG(v, v')$), follows the distribution $Q_{AGG}$. Assume $Q$ and $Q_{AGG}$ are over the same feature space $\mathcal{X}$. The information gain of the graph $\mathcal{G}$ can be measured by the Kullback-Leibler divergence~\cite{kullback1951information} as:
	\begin{equation}
		D_{KL}(Q_{AGG}||Q) = \int_{\mathcal{X}} Q_{AGG}(x)\cdot \log\frac{Q_{AGG}(x)}{Q(x)} dx
	\end{equation}
	Similarly, for a specific node $v_i\in\mathcal{V}$ and its neighboring node $v_j\in \mathcal{N}_{v_i}$, we can denote the information gain of $v_i$ from $v_j$ as $D_{KL}(Q_{AGG(v_i, v_j)}||Q_{v_i})$.
	
\end{defn}

From Hou et al.~\cite{hou2019measuring}, the Kullback-Leibler divergence can measure the information gain between one node and all neighboring nodes. Since for node $v$, if the features of its neighboring nodes are equal to $v$'s feature, obviously the divergence is $0$. On the other hand, when the neighboring nodes possess significantly different feature distributions compared with the central node, the divergence is relatively large.
However, in practice, the accurate distributions of node features are unknown. Therefore we propose the following {\feature} to quantify the actual information gain.

\begin{defn}\label{def:feat_smooth}
	(Feature Smoothness): Given a graph $\mathcal{G} = (\mathcal{V}, \mathcal{E})$, the feature smoothness of the graph is defined as:
	\begin{equation}
		\lambda_f = \frac{||\sum_{v\in\mathcal{V}}\left(\sum_{v'\in\mathcal{N}_v}(x_v - x_{v'})\right)^2||_1}{|\mathcal{E}|\cdot d}
	\end{equation}
	
	where $||\cdot||_1$ is the Manhattan norm, $x_{v}\in \mathbb{R}^d$ and $x_{v'}\in \mathbb{R}^d$ are the initial features of $v$ and $v'$, respectively. Similarly, we can also define the feature smoothness between connected nodes $v_i$ and $v_j$ as: 
	
	\begin{equation}
		\lambda_{f^{(v_i, v_j)}} = \frac{||(x_{v_i} - x_{v_j})^2||_1}{d}
	\end{equation}

\end{defn}
The $\lambda_f$ in Definition~\ref{def:feat_smooth} counts the sum of norm-2 distance among connected nodes, which can provide an overall feature divergence of the entire graph. A higher $\lambda_f$ indicates that the feature signals of a graph have \textit{higher frequency}~\cite{hou2019measuring}, meaning that the connected nodes in the graph are more likely dissimilar. While $\lambda_f$ is an overall metric of the entire graph, the $\lambda_{f^{(v_i, v_j)}}$ measures the similarity between connected nodes $v_i$ and $v_j$. Different from the Kullback-Leibler divergence that is unknown in practice, instead $\lambda_{f^{(v_i, v_j)}}$ can be easily calculated for specific connected nodes. Therefore we propose to explicitly quantify the information gain with the help of $\lambda_{f^{(v_i, v_j)}}$, and state the relation between information gain and {\feature} in the following theorem.

\begin{thm}\label{thm:positive_relate}
	Given a $\mathcal{G} = (\mathcal{V}, \mathcal{E})$, the information gain of node $v_i$ from its neighboring node $v_j$ in Definition~\ref{def:info_gain} is positively related to their feature smoothness $\lambda_{f^(v_i, v_j)}$, i.e., 
	\begin{equation}
		D_{KL}(Q_{AGG(v_i, v_j)}||Q_{v_i}) \sim \lambda_{f^(v_i, v_j)}
	\end{equation}

\end{thm}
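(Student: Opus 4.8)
The plan is to make the abstract distributions of Definition~\ref{def:info_gain} concrete by adopting a parametric model for the node representations, and then to reduce the Kullback--Leibler divergence to a closed form in which the dependence on $\lambda_{f^{(v_i,v_j)}}$ becomes explicit. I would model the representation of $v_i$ as a Gaussian $Q_{v_i}=\mathcal{N}(x_{v_i},\Sigma)$ centered at its initial feature, which is the maximum-entropy choice once a mean and covariance are fixed and is the standard device that renders the KL divergence analytically tractable. The aggregated representation $AGG(v_i,v_j)$ produced by the scheme in Equation~\ref{equ:aggre} combines $x_{v_i}$ with $x_{v_j}$, so I would take $Q_{AGG(v_i,v_j)}=\mathcal{N}(\mu_{AGG},\Sigma)$ with $\mu_{AGG}=(1-\beta)x_{v_i}+\beta x_{v_j}$ for some aggregation weight $\beta\in(0,1]$ inherited from the coefficient matrix $\mb{\alpha}$.

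With both distributions Gaussian and sharing the covariance $\Sigma$, the Kullback--Leibler divergence collapses to the Mahalanobis distance between the means,
\begin{equation}
	D_{KL}(Q_{AGG(v_i,v_j)}\|Q_{v_i})=\tfrac{1}{2}(\mu_{AGG}-x_{v_i})^{\top}\Sigma^{-1}(\mu_{AGG}-x_{v_i}),
\end{equation}
because the log-determinant and trace terms cancel when the two covariances coincide. Substituting $\mu_{AGG}-x_{v_i}=\beta\,(x_{v_j}-x_{v_i})$ turns this into $\tfrac{1}{2}\beta^{2}(x_{v_i}-x_{v_j})^{\top}\Sigma^{-1}(x_{v_i}-x_{v_j})$, which under an isotropic covariance $\Sigma=\sigma^{2}\mb{I}$ simplifies to $\tfrac{\beta^{2}}{2\sigma^{2}}\|x_{v_i}-x_{v_j}\|_2^{2}$.

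The final step is to recognize this squared distance as a rescaling of the feature smoothness in Definition~\ref{def:feat_smooth}: since the Manhattan norm of the elementwise square satisfies $\|(x_{v_i}-x_{v_j})^{2}\|_1=\|x_{v_i}-x_{v_j}\|_2^{2}=d\cdot\lambda_{f^{(v_i,v_j)}}$, I obtain $D_{KL}(Q_{AGG(v_i,v_j)}\|Q_{v_i})=\tfrac{\beta^{2}d}{2\sigma^{2}}\,\lambda_{f^{(v_i,v_j)}}$, a monotone-increasing (indeed linear) function of $\lambda_{f^{(v_i,v_j)}}$, which establishes the positive relation claimed in the theorem.

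The main obstacle is justifying the modeling assumptions rather than executing the algebra. The Gaussian form and, more critically, the shared-covariance assumption are precisely what make the shape-dependent terms vanish; relaxing them reintroduces a trace term $\mathrm{tr}(\Sigma_{v_i}^{-1}\Sigma_{AGG})$ and a log-determinant ratio that need not be constant in $\lambda_{f^{(v_i,v_j)}}$. I would therefore either argue that aggregating a single neighbor perturbs the mean far more than it perturbs the covariance, or restrict the claim to a leading-order statement in which the mean-difference term dominates. Because the theorem asserts only a qualitative positive relation rather than an identity, exhibiting this monotone leading-order behavior suffices, and the real burden is to defend that the covariance contribution cannot reverse the trend.
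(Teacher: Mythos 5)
Your proposal is correct in its algebra and reaches the stated conclusion, but it takes a genuinely different route from the paper. The paper argues nonparametrically: it first proves a lemma that estimates $D_{KL}(S\|Q)$ by binning the feature space into a histogram, Taylor-expands the logarithm to second order in the bin discrepancies $\Delta_i$, and concludes $D_{KL}(S\|Q) \sim Var(|\mathcal{N}_v|\cdot x_v - \sum_{v'\in\mathcal{N}_v}x_{v'})$; the theorem then specializes this to the single neighbor $v_j$, identifies $Var(x_{v_i}-x_{v_j})$ with $\lambda_{f^{(v_i,v_j)}}$ via Definition~\ref{def:feat_smooth} (following Hou et al.), and finally transfers the relation from $Q_{v_j}$ to $Q_{AGG(v_i,v_j)}$ by noting that aggregation is a weighted sum. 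You instead posit a parametric model, $Q_{v_i}=\mathcal{N}(x_{v_i},\sigma^2\mb{I})$ with $Q_{AGG(v_i,v_j)}$ Gaussian and mean a convex combination of the two features, under which the KL divergence has an exact closed form, yielding the linear identity $D_{KL}=\tfrac{\beta^{2}d}{2\sigma^{2}}\,\lambda_{f^{(v_i,v_j)}}$. The trade-off: the paper commits to no distributional family but pays with a chain of heuristic steps (Taylor truncation, treating bin counts $|H_i|_S$ as constants, asserting proportionality between the $\Delta_i^2$ statistic and the expected squared difference); your route is transparent and exact once the model is granted, but the entire burden shifts onto the Gaussian shared-covariance assumption, which you correctly identify and defend as a leading-order claim. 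Notably, your convex-combination mean $\mu_{AGG}=(1-\beta)x_{v_i}+\beta x_{v_j}$ handles the final step more cleanly than the paper does: the paper writes the aggregate as $x_{v_i}+\alpha_j x_{v_j}$ and passes from $D_{KL}(Q_{AGG(v_i,v_j)}\|Q_{v_i})$ to $D_{KL}(Q_{v_j}\|Q_{v_i})$ essentially by assertion, whereas in your model the mean shift $\beta(x_{v_j}-x_{v_i})$ makes the dependence on the feature difference explicit. Since the theorem claims only a qualitative positive relation ($\sim$), both arguments sit at a comparable level of rigor, and yours is arguably the more self-contained.
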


Before proving Theorem~\ref{thm:positive_relate}, here we provide a lemma to assist the proof.

\begin{lemma}\label{lem:1}
	Assume $Q$ is the distribution of node $v\in\mathcal{V}$ and $S$ is the distribution of the neighboring nodes$\sum_{v'\in\mathcal{N}_v}v'$, the Kullback-Leibler divergence $D_{KL}(S||Q)$ is positively related to $Var(|\mathcal{N}_v| \cdot x_v - \sum_{v'\in \mathcal{N}_v}x_{v'})$, i.e., 
	\begin{equation}
		D_{KL}(S||Q) \sim Var(|\mathcal{N}_v| \cdot x_v - \sum_{v'\in \mathcal{N}_v}x_{v'})
	\end{equation}
	where $Var(\cdot)$ denotes the variance, $|\mathcal{N}_v|$ is the number of nodes in $\mathcal{N}_v$.
\end{lemma}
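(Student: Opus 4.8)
The plan is to make the abstract divergence $D_{KL}(S\|Q)$ computable and then match it to the stated variance. First I would record the algebraic identity $|\mathcal{N}_v|\cdot x_v - \sum_{v'\in\mathcal{N}_v}x_{v'} = \sum_{v'\in\mathcal{N}_v}(x_v - x_{v'})$, so that the quantity inside $Var(\cdot)$ is exactly the accumulated feature displacement of $v$ from its neighbors; the lemma then asserts that the divergence grows with the spread of this displacement.

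Since the true densities are unknown, the central device would be a second-order expansion of the divergence about the coincidence $S = Q$. Writing $S(x) = Q(x)\big(1+\epsilon(x)\big)$ with the normalization constraint $\int_{\mathcal{X}} Q(x)\,\epsilon(x)\,dx = 0$, the first-order contribution cancels and $D_{KL}(S\|Q) = \tfrac{1}{2}\int_{\mathcal{X}} Q(x)\,\epsilon(x)^2\,dx + O(\epsilon^3)$. The leading term is precisely $\tfrac{1}{2}\,\mathbb{E}_Q[\epsilon^2] = \tfrac{1}{2}\,Var_Q(\epsilon)$, so to leading order the divergence is itself a variance. This is the crucial observation, since it already produces the $Var(\cdot)$ on the right-hand side rather than a bare squared norm.

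It then remains to identify the perturbation $\epsilon$ with the feature displacement. I would argue that, under the aggregation $AGG$, the shift from $Q$ to $S$ is driven linearly by $\sum_{v'\in\mathcal{N}_v}(x_v - x_{v'})$, so that $\epsilon$ is, up to a fixed normalization, this displacement read across the $d$ feature coordinates; then $Var_Q(\epsilon) \propto Var\big(|\mathcal{N}_v|\cdot x_v - \sum_{v'\in\mathcal{N}_v}x_{v'}\big)$, yielding $D_{KL}(S\|Q) \sim Var\big(|\mathcal{N}_v|\cdot x_v - \sum_{v'\in\mathcal{N}_v}x_{v'}\big)$ as claimed. As a monotonicity cross-check I would verify the same trend under the Gaussian model with $Q,S$ sharing covariance $\Sigma$, where $D_{KL}(S\|Q) = \tfrac{1}{2}(\mu_S-\mu_Q)^\top\Sigma^{-1}(\mu_S-\mu_Q)$ is strictly increasing in the squared mean shift.

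The hard part will be this final identification: the lemma's ``$\sim$'' is informal, so making it rigorous forces a modeling choice (the linear link between $\epsilon$ and the displacement) together with a justification that the covariance and higher-order terms are subdominant. I expect the subtle point to be reconciling the \emph{coordinate-wise} variance appearing in $Var(\cdot)$ with the \emph{distributional} variance $Var_Q(\epsilon)$ from the expansion --- that is, showing the centering inside $Var(\cdot)$ is harmless under the feature assumptions, so that both variances increase together and the positive relation is preserved.
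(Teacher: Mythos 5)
Your proposal follows essentially the same route as the paper's own proof: the paper also performs a second-order expansion of $D_{KL}(S\|Q)$ around the coincidence $S=Q$ --- carried out there in discretized form, by histogram-binning the feature space into $r^d$ bins and Taylor-expanding $\log(|H_i|_S + \Delta_i)$ in the bin discrepancies $\Delta_i$, with the equal-sample-count constraint $\sum_i \Delta_i = 0$ cancelling the first-order term --- arriving at the chi-squared-type quantity $\frac{\ln 2}{4|\mathcal{E}|}\sum_i \Delta_i^2/|H_i|_S$, which is exactly the discrete counterpart of your $\tfrac{1}{2}\mathbb{E}_Q[\epsilon^2]$. The final identification with $Var(|\mathcal{N}_v|\cdot x_v - \sum_{v'\in\mathcal{N}_v}x_{v'})$ is made in the paper by the same heuristic linear-response argument you describe (treating $|H_i|_S$ as constant and positing that the discrepancies scale with the feature displacement), so the modeling leap you flag as the hard part --- including the conflation of the second moment with a centered variance --- is present, and equally informal, in the paper's own proof.
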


\begin{proof}{\textbf{of \ Lemma 1\ \ \ \ \ }}
	For $D_{KL} (S||Q)$, since the explicit formulas of $S$ and $Q$ are unknown, we use the discrete space approach: the histogram, to estimate $S$ and $Q$. In detail, we divide the feature space $\mathcal{X} = [0,1]^d$ into $r^d$ bins $\{H_1, H_2, \dots H_{r^d}\}$ evenly, and the length is $\frac{1}{r}$ and dimension is $d$. Following the distributions of $S$ and $Q$, there are $2|\mathcal{E}|$ corresponding samples of nodes (each connected two nodes can be regarded as the central node and the neighboring node, and vice versa) in total, we use the $|H_i|_Q$ and $|H_i|_S$ to denote the number of samples falling into bin $H_i$. In this way, we have
	\begin{equation}
		\begin{split}
			D_{KL} (S || Q) \simeq& \sum_{i=1}^{r^d} \frac{|H_i|_S}{2 |\mathcal{E}|}\cdot \log \frac{\frac{|H_i|_S}{2|\mathcal{E}|}}{\frac{|H_i|_Q}{2|\mathcal{E}|}}\\
			=& \frac{1}{2|\mathcal{E}|}\sum_{i=1}^{r^d}|H_i|_S\cdot\log \frac{|H_i|_S}{|H_i|_Q}\\
			=&  \frac{1}{2|\mathcal{E}|} (\sum_{i=1}^{r^d} |H_i|_S\cdot \log |H_i|_S -\sum_{i=1}^{r^d} |H_i|_S\cdot \log |H_i|_Q )\\
			=& \frac{1}{2|\mathcal{E}|} (\sum_{i=1}^{r^d}  |H_i|_S\cdot \log |H_i|_S - \sum_{i=1}^{r^d} |H_i|_S\cdot \log ( |H_i|_S + \Delta_i))\\
		\end{split}
	\end{equation}
	where $\Delta_i = |H_i|_Q - |H_i|_S$. In this way, we can expand the term $\sum_{i=1}^{r^d} |H_i|_S\cdot \log ( |H_i|_S + \Delta_i)$ by applying the second-order Taylor approximation at the point 0 as 
	\begin{equation}
		\sum_{i=1}^{r^d}|H_i|_S \cdot \log ( |H_i|_S + \Delta_i)
		\simeq \sum_{i = 0}^{r^d} |H_i|_S ( \log  |H_i|_S + \frac{\ln2}{ |H_i|_S}\cdot \Delta_i - \frac{\ln 2}{2( |H_i|_S)^2}\cdot\Delta_i^2)
	\end{equation}
	Note that the number of samples from  $S$ and $Q$ are the same, which means
	
	\begin{equation}
		\sum_{i = 0}^{r^d} |H_i|_S = \sum_{i = 0}^{r^d} |H_i|_Q = 2|\mathcal{E}|
	\end{equation}
	So we have $\sum_{i = 0}^{r^d} \Delta_i = 0$. Therefore, 
	\begin{equation}
		\begin{split}
			D_{KL} (S || Q) \simeq& \frac{1}{2|\mathcal{E}|} (\sum_{i=1}^{r^d}  |H_i|_S\cdot \log |H_i|_S - \sum_{i=1}^{r^d} |H_i|_S\cdot \log ( |H_i|_S + \Delta_i))\\
			\simeq& \frac{1}{2|\mathcal{E}|} (\sum_{i=1}^{r^d}  |H_i|_S\cdot \log |H_i|_S - \sum_{i = 0}^{r^d} |H_i|_S ( \log  |H_i|_S + \frac{\ln2}{ |H_i|_S}\cdot \Delta_i - \frac{\ln 2}{2( |H_i|_S)^2}\cdot\Delta_i^2))\\
			=& \frac{1}{2|\mathcal{E}|} \sum_{i=1}^{r^d}(\frac{\ln 2}{2|H_i|_S}\Delta_i^2 - \ln 2\Delta_i)\\
			=& \frac{\ln 2}{4|\mathcal{E}}| \sum_{i=1}^{r^d}\frac{\Delta_i^2}{|H_i|_S}
		\end{split}
	\end{equation}
	For $S$ and $Q$, we consider the samples of $S$ as $\{x_v : v\in\mathcal{V}\}$ and samples of $Q$ as $\{\frac{1}{|\mathcal{N}_v|}\sum_{v'\in\mathcal{N}_v} x_{v'} : v\in\mathcal{V}\}$ with counts $|\mathcal{N}_v|$ for node $v$. In this way, the difference between $S$ and $Q$ can be represented by the expectation of  $|\mathcal{N}_v| x_v - \sum_{v'\in\mathcal{N}_v} x_{v'}$. Meanwhile, the discrepancy of numbers of samples in each bin, i.e. $\Delta_i$, is positively correlated with the difference between $S$ and $Q$. If we regard the $|H_i|_S$ as constant, we can infer
	
	\begin{equation}
		\begin{split}
			D_{KL} (S || Q) \simeq&\frac{\ln 2}{4|\mathcal{E}}| \sum_{i=1}^{r^d}\frac{\Delta_i^2}{|H_i|_S}\\
			\simeq & \mathbb{E} [(|\mathcal{N}_v| x_v - \sum_{v'\in\mathcal{N}_v} x_{v'})^2]\\
			=& Var(|\mathcal{N}_v| x_v - \sum_{v'\in\mathcal{N}_v} x_{v'})
		\end{split}
	\end{equation}
	
\end{proof}


Based on the Lemma~\ref{lem:1}, state the proof of Theorem~\ref{thm:positive_relate}.
\begin{proof}{\textbf{of \ Theorem 1\ \ \ \ \ }}
	According to Lemma~\ref{lem:1}, for node $v_i$, $S$ is the distribution of its neighboring nodes $\sum_{v'\in\mathcal{N}_{v_i}} x_{v'}$. Considering one neighboring node of $v_i$ (e.g., $v_j\in\mathcal{N}_{v_i}$), the $D_{KL}(Q||S)$ in Lemma~\ref{lem:1} will derive into $D_{KL}(Q_{v_i}||Q_{v_j})$, and it has
	\begin{equation}
		\begin{split}
			D_{KL}(Q_{v_j}||Q_{v_i})&\sim Var(|\{v_j\}|\cdot x_{v_i} - \sum_{v'\in \{v_j\}}x_{v'})\\
			&= Var(x_{v_i} - x_{v_j})
		\end{split}
	\end{equation}
	According to the proof of Theorem 4 in~\cite{hou2019measuring} and our Definition~\ref{def:feat_smooth},
	\begin{equation}
		\begin{split}
			\lambda_{f^{(v_i, v_j)}} =& \frac{||(x_{v_i} - x_{v_j})^2||_1}{d}\\
			= & \frac{||\sum_{v\in\{v_i\}}(\sum_{v'\in\{v_j\}}x_v - x_{v'})^2||_1}{|\{v_i\}|\cdot d}\\
			= & \frac{||Var(x_{v_i} - x_{v_j})||_1}{d}\\
			\sim & Var(x_{v_i} - x_{v_j})
		\end{split}
	\end{equation}
	Therefore, we can find that 
	\begin{equation}
		D_{KL}(Q_{v_j}||Q_{v_i})\sim 	\lambda_{f^{(v_i, v_j)}}
	\end{equation}
	%
	Finally, since the $AGG_{(v_i, v_j)}$ in Definition~\ref{def:info_gain} is equivalent to weighted summation of $x_{v_i} $ and $x_{v_j}$ i.e., $x_{v_i} + \alpha_{j}x_{v_j}$, where $\alpha_{j}\in(0, 1)$ is an fixed value, and $D_{KL}(Q_{v_i}||Q_{v_i}) = 0$, thus we can conclude
	\begin{equation}
		\begin{split}
			D_{KL}(Q_{AGG(v_i, v_j)}||Q_{v_i}) \sim D_{KL}(Q_{v_j}||Q_{v_i})
			\sim \lambda_{f^{(v_i, v_j)}}
		\end{split}
	\end{equation}
\end{proof}

From Theorem~\ref{thm:positive_relate}, we can conclude that a higher $\lambda_{f^{(v_i, v_j)}}$ represents higher information between connected nodes. In such case, for a specific node $v_i$ we are able to measure its information gain from the neighboring node $v_j$ by computing the feature smoothness $\lambda_{f^{(v_i, v_j)}}$. In Section~\ref{subsec:sampler}, $\lambda_{f^{(v_i, v_j)}}$ serves as the metric to determine the neighboring nodes sampling strategy, where the neighboring nodes can provide more information gain will be sampled.

\subsubsection{Connection Failure Distance} 
As we mentioned before, a long hop distance can lead to unexpected aggregation between nodes from different clusters. When measuring the aggregation of two nodes from different clusters by feature smoothness, the information gain may be relatively large, but the quality of this information gain is not high. Considering the node classification task, nodes from two different clusters are likely to have different labels. Aggregating the representations of nodes with different labels essentially introduces negative information to damage the discrimination of node representation, which results in \textit{oversmoothing} finally. Therefore, in addition to measuring the quantity of information gain, we also need to explore the way to measure its quality.

\begin{figure}[t]
	\centering
	\begin{minipage}[l]{0.45\columnwidth}
		\centering
		\includegraphics[width=1\textwidth]{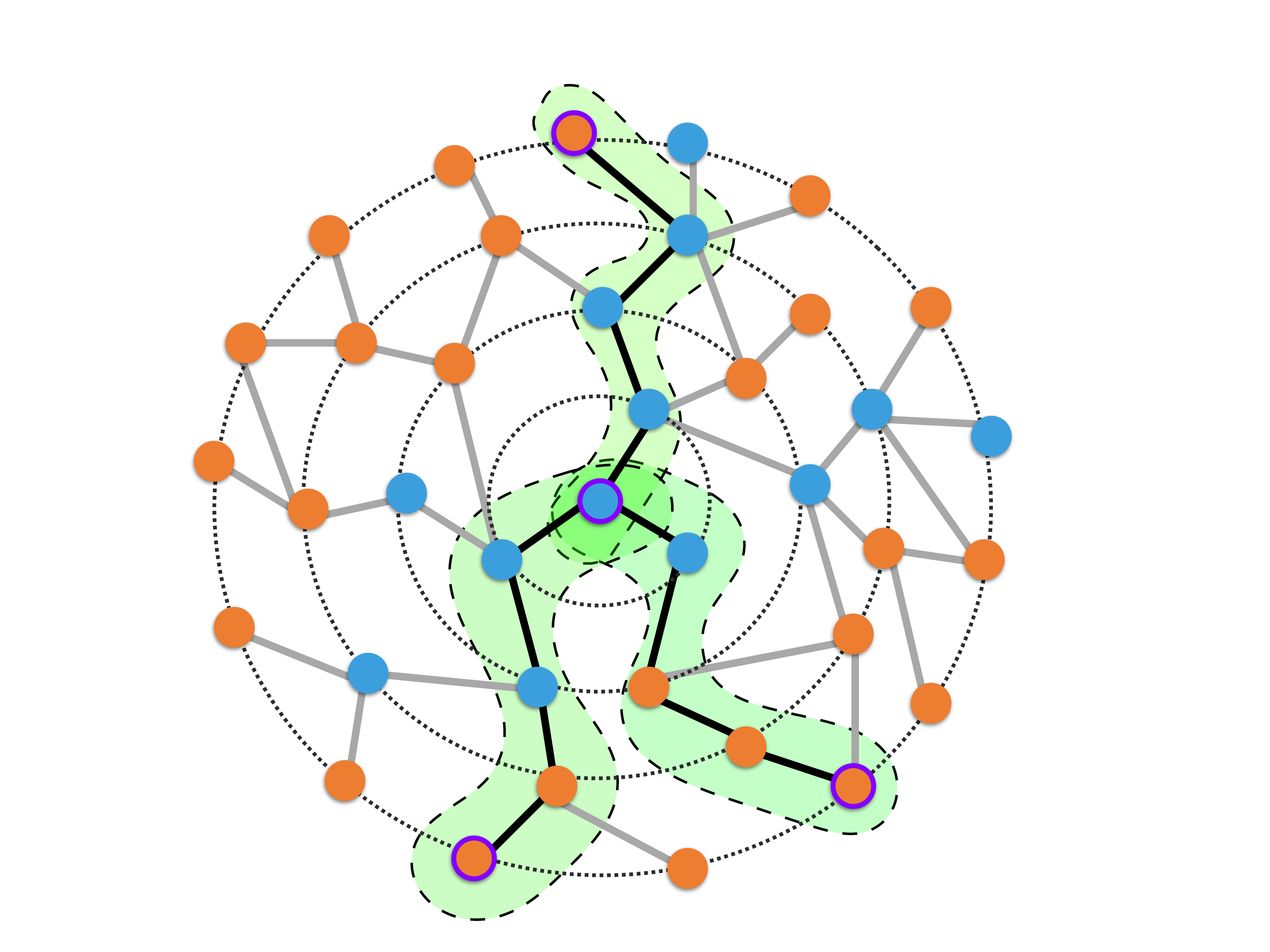}
	\end{minipage}
	\caption{An illustration of the failure connection. The different colors of the nodes represent different node labels. The three connections marked in green are failed connections.}
	\label{fig:connect_fail}
\end{figure}
Many previous works have analyzed through experiments that GNN models with deep stacking layers (long hop distances) bring very limited improvement in model performance~\cite{rong2019dropedge, zhang2019gresnet}, and the model performance even declines significantly~\cite{huang2020tackling}. This actually shows that the longer the hop distance, the more negative information is introduced, and the quantity of negative information will eventually exceed the positive information. It is reasonable to consider that neighbors with the same label contribute positive information to the information gain and other neighbors contribute negative disturbance. As the hop distance between two nodes becomes longer, the probability of two nodes having different labels rises, and the possibility of introducing negative information is also rising. We treat multi-hop connections between nodes with different labels as failure connections because these connections bring more negative information gain. We provide a simple illustration of the failure connection in Figure~\ref{fig:connect_fail}. The different colors of the nodes in the graph represent different classes. The three connections marked in green are failed connections since they link nodes with different labels. The longer the hop distance is, the probability that the central node encounters nodes with different labels increases. In this simple illustration, when the hop distance is greater than 3, the number of nodes with different labels will be far more than nodes with the same label. A large number of failure connections will make the negative information gain more than the positive one. Here we define a metric related to hop distance, namely {\distance}, to measure the quality of information gain.


\begin{defn}
	(Connnection Failure Distance): To measure the quality of the information gain brought by the neighbors aggregation process, we define the metric {\distance} as:
	\begin{equation}
		\lambda_d = \frac{\sum_{v_i \in \mathcal{V}}\max\{hop\_dis(v_i, v_j)|\ \  v_j \in \mathcal{V},\mathbb{I}(v_i, v_j) = 1\}}{|\mathcal{V}|} 
	\end{equation}   
\end{defn}
where $\mathbb{I}(v_i, v_j)$ is an indicator function: if the label $y_{v_i}$ of the node $v_i$ is the same as $y_{v_j}$ of the node $v_j$, it returns $1$, otherwise returns $0$. $hop\_dis(v_i, v_j)$ is the smallest number of hops between $v_i$ and $v_j$. If two nodes can not be connected through multi-hops, we set the value as $0$. $\lambda_d$ reflects the average longest hop distance between two nodes with the same label in $\mathcal{G}$. Neighbors aggregation longer than $\lambda_d$ is likely to be performed between two nodes with different labels, which contributes more negative information. Because of this, we name $\lambda_d$ as {\distance} to indicate that the multi-hop connections longer than $\lambda_d$ are meaningless.

\begin{thm}\label{thm:fail_dist}
	Given two nodes $v_i$ and $v_j$, if $hop\_dis(v_i, v_j) > \lambda_d$, $p(\mathbb{I}(v_i, v_j)=1) < \frac{1}{\#classes}$. Here, $p(\cdot)$ represents the probability and $\#classes$ denotes the total number of classes of nodes.
\end{thm}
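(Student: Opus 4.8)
The plan is to reduce the probabilistic claim to a deterministic property of a per-node \emph{same-label reach} and then average. For each node $v_i$ I would define its reach $R_{v_i} = \max\{hop\_dis(v_i, v_j) \mid v_j \in \mathcal{V}, \mathbb{I}(v_i, v_j) = 1\}$, so that by the definition of \distance{} we have exactly $\lambda_d = \frac{1}{|\mathcal{V}|}\sum_{v_i \in \mathcal{V}} R_{v_i} = \mathbb{E}[R_{v_i}]$ for a uniformly chosen central node. The first step is to record the deterministic fact that is immediate from this definition: if $\mathbb{I}(v_i, v_j) = 1$ then $hop\_dis(v_i, v_j) \le R_{v_i}$, and contrapositively no same-label node can lie strictly beyond the reach of $v_i$. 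Consequently, for any node with $R_{v_i} \le \lambda_d$, every $v_j$ with $hop\_dis(v_i, v_j) > \lambda_d$ automatically satisfies $\mathbb{I}(v_i, v_j) = 0$ and contributes zero same-label probability.

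Next I would isolate where a same-label pair can still occur beyond $\lambda_d$: only when the central node is anomalously well connected, $R_{v_i} > \lambda_d$. Since $\lambda_d = \mathbb{E}[R_{v_i}]$, Markov's inequality controls the mass of such nodes, $p(R_{v_i} \ge h) \le \lambda_d / h$, so the fraction of central nodes that can support a same-label neighbour at hop distance $h > \lambda_d$ decays in $h$. Combining this with the baseline same-label rate — under approximately balanced classes two arbitrary nodes agree in label with probability $\sum_c (n_c/|\mathcal{V}|)^2 \approx 1/\#classes$, where $n_c$ is the size of class $c$ — the aim is to show that conditioning on $hop\_dis(v_i, v_j) > \lambda_d$ strictly suppresses this baseline and yields $p(\mathbb{I}(v_i,v_j)=1) < 1/\#classes$.

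The main obstacle is the passage from the \emph{average} reach $\lambda_d$ to a \emph{pointwise} probability bound exactly at the threshold $h = \lambda_d$: the Markov estimate $p(R_{v_i} \ge h) \le \lambda_d / h$ only drops below $1/\#classes$ once $h \gtrsim \#classes \cdot \lambda_d$, not already at $h = \lambda_d$, so Markov alone is vacuous at the stated threshold. To close this gap I expect to invoke the homophily structure explicitly, modelling the conditional same-label probability $f(h) = p(\mathbb{I}(v_i,v_j)=1 \mid hop\_dis(v_i,v_j) = h)$ as monotonically decreasing in $h$, with $\lambda_d$ marking the distance at which $f(h)$ has already crossed below the chance level $1/\#classes$. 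The remaining work is to make the balanced-class approximation and this monotone-decay hypothesis explicit — stating them as the same smoothness/homophily assumptions already adopted for \distance{} — and to verify that the conclusion degrades gracefully under mild class imbalance; without these assumptions the inequality holds only heuristically rather than as a sharp bound.
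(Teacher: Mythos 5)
Your opening route --- reducing to the per-node reach $R_{v_i}$, noting $\lambda_d = \mathbb{E}[R_{v_i}]$, and applying Markov's inequality --- is a genuinely different and more concrete use of the definition of $\lambda_d$ than anything in the paper, and your diagnosis of its failure is exactly right: Markov gives $p(R_{v_i} \ge h) \le \lambda_d/h$, which equals $1$ at $h = \lambda_d$ and only drops below $1/\#classes$ once $h \gtrsim \#classes\cdot\lambda_d$. The problem is what you do after that diagnosis. Your fallback hypothesis --- that $f(h) = p(\mathbb{I}(v_i,v_j)=1 \mid hop\_dis(v_i,v_j) = h)$ is monotone decreasing \emph{and has already crossed below $1/\#classes$ by $h = \lambda_d$} --- is not a homophily assumption but the theorem itself restated in per-distance form: once you posit that the crossing occurs at or before $\lambda_d$, the conclusion for $h > \lambda_d$ follows from monotonicity alone, so nothing substantive is proved. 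The genuine gap is that you never connect the location of the crossing to $\lambda_d$; you assume that connection.

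The paper's own proof (which is itself heuristic) avoids this particular circularity through a complement argument that your proposal is missing. It argues in the other direction: homophily implies that for sufficiently small distances the same-label probability exceeds chance, so there exists some $d_0 < \lambda_d$ with $p(\mathbb{I}(v_i,v_j)=1 \mid hop\_dis(v_i,v_j) \le d_0) > \frac{1}{\#classes}$; then, implicitly taking the marginal same-label rate to be the chance level $\frac{1}{\#classes}$ (your balanced-class approximation), the law of total probability forces the complementary event below chance, $p(\mathbb{I}(v_i,v_j)=1 \mid hop\_dis(v_i,v_j) > d_0) < \frac{1}{\#classes}$; finally, since $\{hop\_dis > \lambda_d\} \subset \{hop\_dis > d_0\}$, monotone decay transfers the bound to the tail beyond $\lambda_d$. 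The point is that the above-chance behaviour at short range is a consequence of homophily alone, not a restatement of the conclusion, so deriving the below-chance tail from it by complementation is the step with actual content --- and it is the step your write-up lacks. To be fair to you, the paper never justifies its claim that $d_0 < \lambda_d$ (that is where its proof hand-waves, and your Markov computation is a good demonstration of why closing that gap rigorously is hard), and it leaves both the balanced-class and monotonicity assumptions implicit where you state them honestly. But a repaired version of your argument should replace ``assume the crossing precedes $\lambda_d$'' with this positive-short-range-plus-complement structure, rather than assuming the very inequality to be shown.
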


\begin{proof}
	According to the basic assumption in the graph structure data, the closer the nodes are, the more similar and the more likely they are to have the same label.  From such assumption, given a $v_i$ we can derive that $p(\mathbb{I}(v_i, v_j) = 1)>\frac{1}{\#classes}$ can always be satisfied when $hop\_dis(v_i, v_j)$ is small enough. It is obvious since nodes closer to $v_i$ have the higher probability owning same label with $v_i$ compared to other nodes, and such higher probability is greater than $\frac{1}{\#classes}$. Thus we can always find a $d_0$ that satisfies $d_0 < \lambda_d$ and $p(\mathbb{I}(v_i, v_j) = 1 | hop\_dis(v_i, v_j)<= d_0) > \frac{1}{\#classes}$. In such case, $p(\mathbb{I}(v_i, v_j) = 1 | hop\_dis(v_i, v_j)>d_0) < \frac{1}{\#classes}$ and then we can get $p(\mathbb{I}(v_i, v_j) = 1 | hop\_dis(v_i, v_j)>\lambda_d) < \frac{1}{\#classes}$.
\end{proof}

According to Theorem~\ref{thm:fail_dist}, given a specific node $v_i$, any node that has longer hop distance than $\lambda_d$ is more likely to own different labels. Thus we can leverage $\lambda_d$ as the metric to guide the subgraph sampling process, aiming to avoid the negative information aggregation within the sampled subgraphs.

However, if we use $\lambda_d$ to measure the quality of the information gain, node labels are required for calculation. Since we cannot obtain the labels of all nodes for training, we utilize labeled nodes in the training set to estimate $\lambda_d$ instead. 

\begin{figure*}[t]
	\centering
	\begin{minipage}[l]{1.02\columnwidth}
		\centering
		\includegraphics[width=1\textwidth]{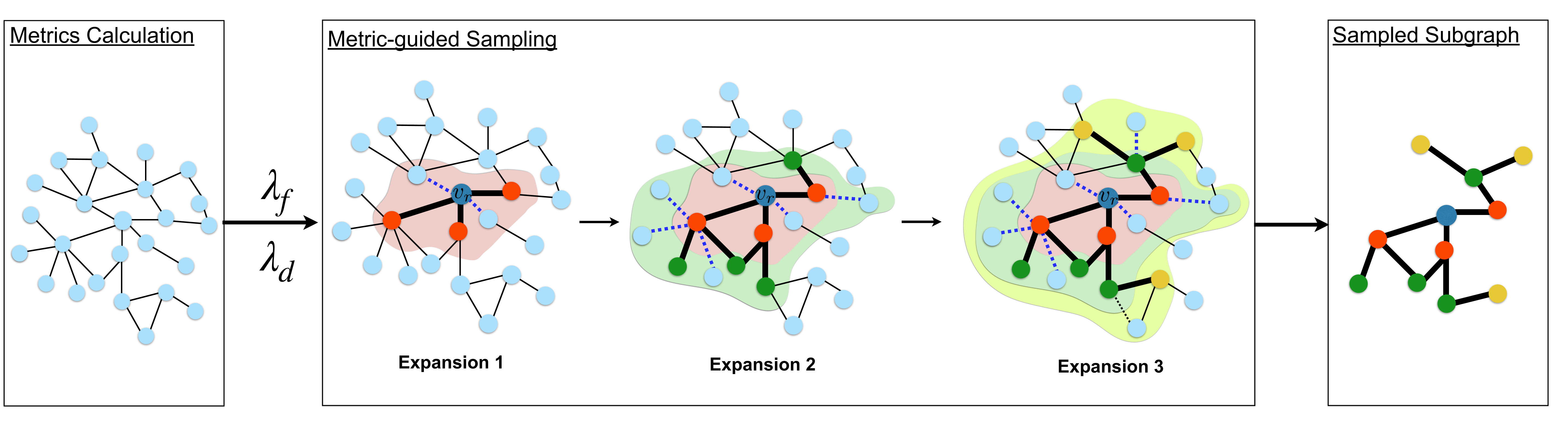}
	\end{minipage}
	\caption{An illustration of the metric-guided sampling process of {\our} sampler. In this example, $\lfloor \lambda_d/2 \rfloor=3$ determines the expansion step is $3$. The sampling process starts from one root node $v_s$. In each expansion, the background color region represents the neighbor set. For two connected nodes $v_i$ and $v_j$, if $\lambda_{f^{(v_i, v_j)}} \ge	\rho \lambda_f$, the edge between them is a bold black line and the node will be sampled. Otherwise, the edge is a blue dotted line, and the node will be dropped. After 3 expansion steps, the colored nodes represent the truly sampled nodes and constitute a subgraph. (Best viewed in color)}
	\label{fig:metric_sampling}
\end{figure*}

\subsection{Metric-Guided Subgraph Sampling}\label{subsec:sampler}

As a subgraph learning framework for GNN models, the subgraph sampling method is the key to the framework's performance. A suitable and accurate selection of neighboring nodes can retain node content information and graph topology at the same time, which can help achieve the same object as the full graph in Equation~\ref{equ:sub_aggre}. Based on the proposed two metrics in Section~\ref{sec:metrics}, we design the metric-guided subgraph sampling method {\our} Sampler to obtain effective subgraphs.

\begin{algorithm}[t]
	\caption{{\our} Sampler} 
	\KwIn{Target graph $\mathcal{G} = (\mathcal{V}, \mathcal{E})$; Connection Failure Distance $\lambda_d$; Graph Feature Smoothness $\lambda_f$; Feature Smoothness Hyper-parameter $\rho$}
	\KwOut{Subgraph $\mathcal{G}_k$}
	Initiate ${\mathcal{G}}_k = (\mathcal{V}_k, \mathcal{E}_k)$ with $\mathcal{V}_k = \emptyset$; Set expansion step $m = 0$; Set current expansion nodes set $CS = \emptyset$; Set next expansion nodes set $NS = \emptyset$;\\
	Randomly pick the root node $v_{r}$, add $v_{r}$ into $\mathcal{V}_k$ and $CS$;
	\While {$m \le \lfloor \lambda_d/2 \rfloor$}
	{
		\For {$v_i \in CS$}
		{
			\For {$v_j \in \{v_j| e_{v_i, v_j}\!\in\!\mathcal{E}, v_j\!\in\!\mathcal{V}\setminus\mathcal{V}_k\}$}
			{	
				\uIf {$\lambda_{f^{(v_i, v_j)}} \ge	\rho \lambda_f$} 
				{
					$\mathcal{V}_k$ = $\mathcal{V}_k \cup \{v_j\}$;\\
					$\mathcal{E}_k$ = $\mathcal{E}_k \cup \{e_{v_i, v_j}\}$;\\
					$NS$ = $NS \cup \{v_j\}$;\\
				}
			}
		}
		$CS = NS$;\\
		$NS = \emptyset$;\\
		$m = m+1$;
	}
	\Return $\mathcal{G}_k$;
	\label{alg:metric_sampling}
\end{algorithm}
We utilize two previous proposed metrics as follows. At first, {\distance} $\lambda_d$ is used to determine the overall size of subgraphs. {\our} Sampler adopts a sampling strategy that expands from a random root node outwards sequentially, thus the number of expansion steps can control the scale of subgraphs to a certain extent. By setting the number of expansion steps to $\lfloor \lambda_d/2 \rfloor$, we can ensure that the hop distance between two nodes in the sampled subgraph does not constitute a failure connection, which can avoid much negative information. Here the value of $\lambda_d$ is estimated by the labeled nodes in the training set.

Second, {\feature} $\lambda_f$ is used to select neighboring nodes with high-quantity information gain during the expansion process. As we prove in Section~\ref{sec:featuresmooth}, the $\lambda_{f^{(v_i, v_j)}}$ can measure the information between two connected nodes. Here, we select the neighboring nodes with the condition $\lambda_{f^{(v_i, v_j)}} \ge \rho \lambda_f$, which means selecting those nodes with higher information gain than the overall level of the graph $\mathcal{G}$ but discard those with smaller information gain. Here, $\rho$ is a hyper-parameter to control the feature smoothness-based selection criteria. 

In Figure~\ref{fig:metric_sampling}, we illustrate the sampling process with the {\distance} of $6$, which enables a more intuitive understanding of each expansion. More details described by pseudocode are exhibited in Algorithm~\ref{alg:metric_sampling}.

\subsection{Subgraph-based Training in {\our}}
{\our} can support training most widely used GNN models (e.g., GAT, GCN) to avoid the three problems mentioned above. 
Different from training GNN models with the full graph $\mathcal{G}$, {\our} employs the subgraphs sampled from $\mathcal{G}$ by {\our} Sampler in each training iteration. In this way, a smaller size of coefficient matrices and only part of nodes are loaded into the GNN model during each training iteration. For a specific iteration, a subgraph $\mathcal{G}_t$ is selected from the set of sampled subgraphs $\mathbb{B}$, whose own the ground truth of nodes in $\mb{y}_{\mathcal{G}_t}$. The GNN model $H_{\mb{W}}(\cdot)$ to be trained is built on $\mathcal{G}_t$ and calculates the loss via forwarding propagation. Then, the weights $\mb{W}$ of the GNN model are updated via SGD. After enough iterations, we can achieve the GNN model with trained weights. We describe the training process of {\our} in detail by the pseudocode in Algorithm~\ref{alg:gnn_subgraph}.

\begin{algorithm}[t]
	\caption{{\our} Training for {\gnn}} 
	\KwIn{Graph $\mathcal{G}$; {\gnn} model $H_{\mb{W}}(\cdot)$; loss function $Loss(\cdot)$; subgraph mini-batch size $M$}
	\KwOut{Trained $H_{\mb{W}}(\cdot)$}
	Initialize training subgraph set $\mathbb{B} = \emptyset$;\\
	\For {$k = 1,2,\dots , M$}
	{
		${\mathcal{G}}_k \gets \textit{{\our} Sampler} $; \enspace/* By Algorithm~\ref{alg:metric_sampling} */ \\
		$\mathbb{B} = \mathbb{B} \cup \{\mathcal{G}_k\}$;
	}
	\For {each iteration}
	{
		Select a subgraph $\mathcal{G}_t$ from $\textit{batch}$;\\
		\vspace{0.02in}
		GNN model $H_{\mb{W}}(\cdot)$ construction on $\mathcal{G}_t$;\\
		Forward propagation to calculate the loss value: $loss = Loss( H_{\mb{W}}(\mathcal{G}_t), \mb{y}_{\mathcal{G}_t})$; \enspace /* $\mb{y}_{\mathcal{G}_t}$ denotes the ground truth of nodes in $\mathcal{G}_t$. */\\
		\vspace{0.02in}
		Backpropagation to update weights $\mb{W}$;
	}
	
	\Return $H_{\mb{W}}(\cdot)$
	\label{alg:gnn_subgraph}
	
\end{algorithm}

\subsection{Representation Aggregation-based Prediction}
After we achieve the GNN model with trained weights, these models can already make predictions for unlabeled nodes (testing samples). When facing the inductive learning in multiple graphs, we can feedforward the multiple unseen graphs in the test set to the GNN model directly. However, for the transductive learning in one single large graph, it is still difficult to feed the full graph into the model and implement the forward propagation to make predictions. Especially the memory space may not be able to load the full graph. In this case, {\our} continues to use the sampled subgraph to make predictions in a semi-supervised way.

There are two problems in using subgraphs to predict: 1, in the training subgraph set $\mathbb{B}$, may not all unlabeled nodes are included. 2, unlabeled nodes may be sampled into multiple subgraphs, so it is possible to output multiple conflicting prediction results. For problem 1, {\our} will sample extra subgraphs using the missing unlabeled nodes of $\mathbb{B}$ as root nodes. These extra subgraphs along with $\mathbb{B}$ will constitute the testing subgraph set $\mathbb{T}$. To deal with problem 2, {\our} implements aggregation on multiple representations of the same node and trains the predictor (e.g. classification layer) with the aggregated representations of labeled nodes. For example, a labeled node $v_i$ are sampled in $\mathcal{G}_{t_1},\mathcal{G}_{t_2},\mathcal{G}_{t_3}$, and we extract the representations  $\mb{h}_{v_i}^{\mathcal{G}_{t_1}}, \mb{h}_{v_i}^{\mathcal{G}_{t_2}}, \mb{h}_{v_i}^{\mathcal{G}_{t_3}}$ learned by the GNN model from each subgraph. In this paper, we use the mean aggregator to combine all representations as:
\begin{equation}
	\mb{h}_{v_i} \leftarrow \textbf{MEAN}(\{\mb{h}_{v_i}^{\mathcal{G}_{t_1}}, \mb{h}_{v_i}^{\mathcal{G}_{t_2}}, \mb{h}_{v_i}^{\mathcal{G}_{t_3}}\})
\end{equation}
The aggregated representation $\mb{h}_{v_i}$ will be used to train a new predictor. The aggregation process is the same for unlabeled nodes. The aggregated representations of unlabeled nodes will be used to make predictions by the new predictor. The representations of the same node in different subgraphs essentially embed the content of partial neighboring nodes and topology information of different local parts of the full graph. Therefore, the way to aggregate the representations of the same node in different subgraphs can enable the final representation with more comprehensive information.

\section{Experiments}\label{sec:experiment}

To show the effectiveness and efficiency of {\our}, extensive experiments have been conducted on benchmark datasets. This section first describes the datasets used in experiments and then introduces the experimental settings in detail. Generally, we aim to answer the following evaluation questions based on experimental results together with the detailed analysis:
\begin{itemize}
	\item \textbf{Question 1}: Can the subgraph-based training of {\our} effectively train GNN models and overcome the aforementioned three problems?
	\item \textbf{Question 2}: Can the {\our} Sampler provide powerful subgraphs to support effective and efficient training?
	\item \textbf{Question 3}: Can the representation aggregation-based prediction of {\our} improve the performance of original GNN models?
	
\end{itemize}
\subsection{Experiment Settings}

\subsubsection{Datasets}
\begin{table*}[t]
	\caption{Statistics of the Datasets in Experiments}
	\small
	\renewcommand\arraystretch{1.1}
	\centering
	\begin{threeparttable}
		\begin{tabular}{p{1.5cm} c c c c c}
			\hline
			\hline
			\multirow{2}*{}&\multicolumn{3}{c}{Transductive}&\multicolumn{2}{c}{Inductive}\\
			
			\cline{2-6}
			&Cora&Citeseer&Pubmed&Flickr&Reddit\\
			\hline
			\# Nodes&2708&3327&19717&89250&232965\\
			\# Edges&5429&4732&44338&899756&11606919\\
			\# Features&1433&3703&500&500&602\\
			\# classes&7&6&3&7&41\\
			Train Rate&0.052&0.036&0.003&0.5&0.66\\
			\hline
			\hline
		\end{tabular}
		
	\end{threeparttable}
	\label{tab:dataset}
\end{table*}

\begin{table*}[t]
	\caption{Test Accuracy Results on All Datasets}
	\renewcommand\arraystretch{1.2}
	\centering
	\begin{threeparttable}
		\begin{tabular}{l c c c c c}
			\hline
			\hline
			\multirow{2}*{\textbf{Methods}}&\multicolumn{3}{c}{Transductive}&\multicolumn{2}{c}{Inductive}\\
			
			\cline{2-6}
			&Cora&Citeseer&Pubmed&Flickr&Reddit\\
			
			\hline
			GraphSAGE&0.8096 $\pm$0.0103&0.6772$\pm$0.0110&0.7589$\pm$0.0161&0.4337$\pm$0.0201&0.9353$\pm$0.0211\\ 
			Cluster-GCN&0.6820$\pm$0.0638&0.6280$\pm$0.0430&0.7947$\pm$0.0036&0.4097$\pm$0.0405&\textbf{0.9523}$\pm$\textbf{0.0338}\\
			GraphSAINT&0.8045$\pm$0.0114&0.6961$\pm$0.0299&0.7424$\pm$0.0202&0.4848$\pm$0.0252&0.9451$\pm$0.0062\\
			\cline{1-6}
			GCN&0.8150$\pm$0.0050&0.7030$\pm$0.0050&0.7890$\pm$0.0070&0.4400$\pm$0.0388&0.9333$\pm$0.0147\\
			\hline
			GCN + Random&0.7945$\pm$0.0105&0.687$\pm$0.0133&0.7345$\pm$0.0163&0.4713$\pm$0.0083&0.8243$\pm$0.0336\\
			GCN + BFS&0.8144$\pm$0.0045&0.7079$\pm$0.0106&0.7971$\pm$0.0021&0.4754$\pm$0.0046&0.8123$\pm$0.0434\\
			GCN + {\ripplewalk}&0.8250$\pm$0.0016&0.7127$\pm$0.0018&0.8259$\pm$0.0225& 0.4797$\pm$0.0038&0.9385$\pm$0.0309\\
			{\our}$_{GCN}$&\textbf{0.8327}$\pm$\textbf{0.0130} &\textbf{0.7170}$\pm$\textbf{0.0252}&\textbf{0.8317}$\pm$\textbf{0.0167}&\textbf{0.5189}$\pm$\textbf{0.0121}&\textbf{0.9499}$\pm$\textbf{0.0232}\\
			\cline{1-6}
			GAT&\textbf{0.8300}$\pm$\textbf{0.0070}&0.7130$\pm$0.0082&0.7903$\pm$0.0033&-&-\\
			\hline
			GAT + Random&0.7921$\pm$0.0236&0.6607$\pm$0.0376&0.6765$\pm$0.0415&0.4534$\pm$0.0040&0.6452$\pm$0.0447\\
			GAT + BFS&0.7756$\pm$0.0281&0.6500$\pm$0.0493&0.7080$\pm$0.0511&0.4642$\pm$0.0232&0.7297$\pm$0.0403\\
			GAT + {\ripplewalk}&0.7994$\pm$0.0309&0.7212$\pm$0.0142&\textbf{0.8210}$\pm$\textbf{0.0019}&0.4724$\pm$0.0089&0.8699$\pm$0.0179\\
			{\our}$_{GAT}$&0.8017$\pm$0.0110&\textbf{0.7250}$\pm$\textbf{0.0121}&0.8174$\pm$0.0275&\textbf{0.4808}$\pm$\textbf{0.0301}&\textbf{0.8776}$\pm$\textbf{0.0320}\\
			\hline
			\hline
		\end{tabular}
		\begin{tablenotes}
			\footnotesize
			\item``-'' insufficient memory.
		\end{tablenotes}
	\end{threeparttable}
	\label{tab:results_all}
\end{table*}

Five benchmark datasets are used to evaluate the proposed framework {\our}: Cora, Citeseer, PubMed~\cite{sen2008collective}, Flickr, and Reddit. The descriptions of every dataset locate in Table~\ref{tab:dataset}. Cora, Citeseer, and Pubmed~\cite{sen2008collective} are standard citation network benchmark datasets. Flickr~\cite{zeng2019graphsaint,mcauley2012image} is built by forming links between images sharing common metadata from Flickr. Edges are formed between images from the same location, submitted to the same gallery, group, or set, images sharing common tags, images taken by friends, etc. Zeng et al.~\cite{zeng2019graphsaint} scan over the $81$ tags of each image and manually merged them to $7$ classes. Each image belongs to one of the $7$ classes, which is used as the label for images. Reddit~\cite{hamilton2017inductive} is a graph dataset constructed from Reddit posts. The node label is the community, or "subreddit" that a post belongs to. These datasets involve both the transductive learning task and inductive learning task. The transductive task in our experiments is semi-supervised node classification on one single graph; the inductive task is the node classification on multiple graphs. The statistic information of them is presented in Table~\ref{tab:dataset}. The train data rate in the table means the ratio of training data over the full dataset.

\subsubsection{GNN Models for Learning}
In our experiments, we adopt two powerful and widely used GNN models, GCN~\cite{kipf2016semi} and GAT~\cite{gat} as the basic models. The settings of these two models are different for various datasets. Specifically, both GCN and GAT contain two layers, and the size of the hidden layer is 32 for Cora, Citeseer, and Pubmed datasets; for Flickr and Reddit datasets, the GCN hidden layer size is 128, and the GAT hidden layer size is 32. For the training of GNNs, we select a part of node data as the validation set: for Cora, Citeseer, and Pubmed datasets, each of the validation sets includes 500 nodes, and the test set contains 1000 nodes; for Flickr and Reddit datasets, two percent of all nodes are included as validation sets, and another ten thousand nodes are selected into the test sets. We set the dropout rate as 0.5 and adopt the Adam~\cite{adam} as the optimizer for back-propagation. The learning rate is 0.01, and the weight decay rate is $5\times 10^{-4}$.

\begin{figure}[t]
	\centering
	\begin{minipage}[l]{\columnwidth}
		\centering
		\includegraphics[width=0.45\textwidth]{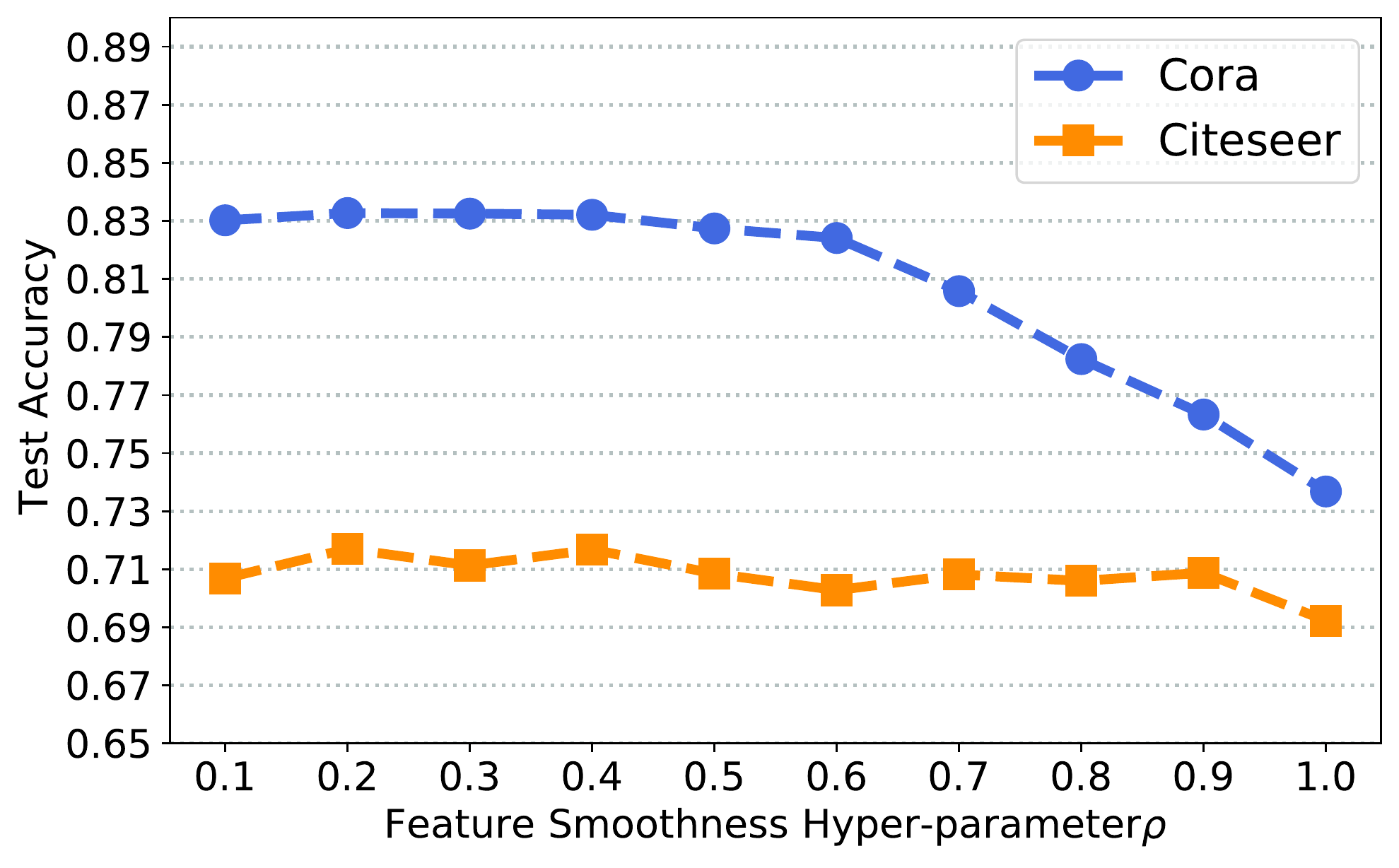}
	\end{minipage}
	\caption{{\our}$_{GCN}$ Results with Different Feature Smoothness Hyper-parameter $\rho$.}
	\label{fig:rho}
\end{figure}

\subsubsection{Comparison Methods}
We compare the proposed {\our} with several state-of-the-art baseline methods, including vanilla GCN, GAT, GraphSAGE, Cluster-GCN, GraphSAINT, and RippleWalk (RWT). These comparison methods can be divided into two categories: (1) full graph-based methods, which use the full graph in each training iteration; (2) subgraph-based methods involve only a subgraph in each train iteration. GCN and GAT belong to the first category for our selected baseline methods, while Cluster-GCN, GraphSAINT, and RWT are in the second category. A detailed description of the comparison methods is listed as follow:

\noindent\textbf{Full Graph-based Methods}
\begin{itemize}
	\item \textbf{GCN}~\cite{kipf2016semi}: GCN is a semi-supervised method proposed for the node classification task. The input of GCN is the full graph.
	\item \textbf{GAT}~\cite{gat}: GAT is an attention-based graph neural network for the node classification. GAT operates on the full graph.
	\item \textbf{GraphSAGE}~\cite{hamilton2017inductive}: GraphSAGE is a general inductive framework that leverages node feature information to efficiently generate node embeddings for previously unseen data. GraphSAGE does not require that all nodes in the graph are present during training.
\end{itemize}

\noindent\textbf{Subgraph-based Methods}
\begin{itemize}
	\item \textbf{\our}: {\our} is the general learning framework for GNN models proposed in this paper.
	\item \textbf{Cluster-GCN}~\cite{chiang2019cluster}: Cluster-GCN conducts subgraph sampling based on the graph clustering, and leverages the sampled subgraphs from different clusters to train the GNNs.
	\item \textbf{GraphSAINT}~\cite{zeng2019graphsaint}: GraphSAINT is designed for inductive learning on graph datasets. It samples subgraphs with different types of random samplers and uses these subgraphs to constitute the mini-batch to train GNN structures.
	\item \textbf{RippleWalk}~\cite{bai2020ripple}: RippleWalk (RWT) is a general training framework for GNN models on both transductive and inductive learning tasks. It samples subgraphs with the expansion steps from initial nodes, and return subgraphs with pre-defined target sizes. RWT constructs the mini-batch with sampled subgraphs to train GNN models.
	
\end{itemize}

Generally, the key of subgraph based methods is how to design the subgraph samplers. Apart from the subgraph-based methods mentioned above, we also compare {\our} with the following subgraph sampling methods:
\begin{itemize}
	\item \textbf{Random}: Random sampler generates subgraph by randomly select nodes from the full graph to a pre-defined target subgraph size.
	\item \textbf{BFS}: BFS sampler conducts the subgraph sampling from one initial node to expand with BFS strategy, and stop when the expanded subgraph reaches the target size. 
\end{itemize}

We replace {\our} Sampler with these two samplers in order to make comparisons. {\our}$_{GAT}$ denotes applying {\our} on the model GAT. GAT + BFS represents that we test {\our} on GAT with the setting of replacing {\our} Sampler with BFS. 

\subsection{Experiment Environment}
We run the experiments on the Server with 3 GTX-1080 ti GPUs. Codes are implemented in Pytorch 1.4.0, torch-geometric 1.6.0, cudatoolkit 10.1.243, and scikit-learn 0.23.1. Code is available at the github link: https://github.com/ \/ YuxiangRen/Measuring-and-Sampling-A-Metric-guided-Subgraph-Learning-Framework-for-Graph-Neural-Network.

\subsection{Experimental Results with Analysis}
\begin{figure}[t]
	\centering
	\subfigure[Comparison on GCN Learning]{\label{fig:Macro_F1}
		\begin{minipage}[l]{0.42\columnwidth}
			\centering
			\includegraphics[width=1\textwidth]{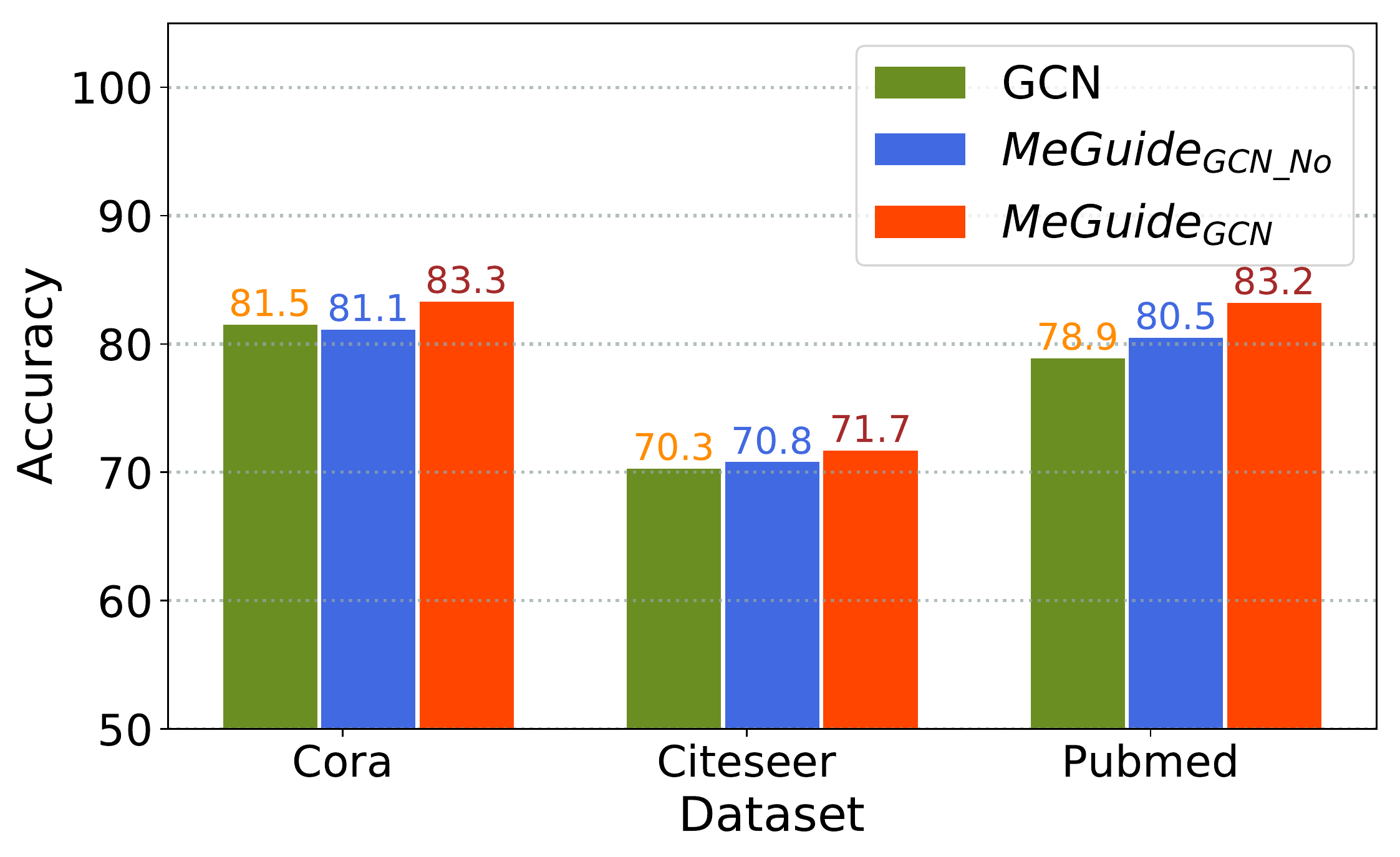}
		\end{minipage}
		\hspace{-10pt}
	}
	\subfigure[Comparison on GAT Learning]{\label{fig:Micro_F1}
		\begin{minipage}[l]{0.42\columnwidth}
			\centering
			\includegraphics[width=1\textwidth]{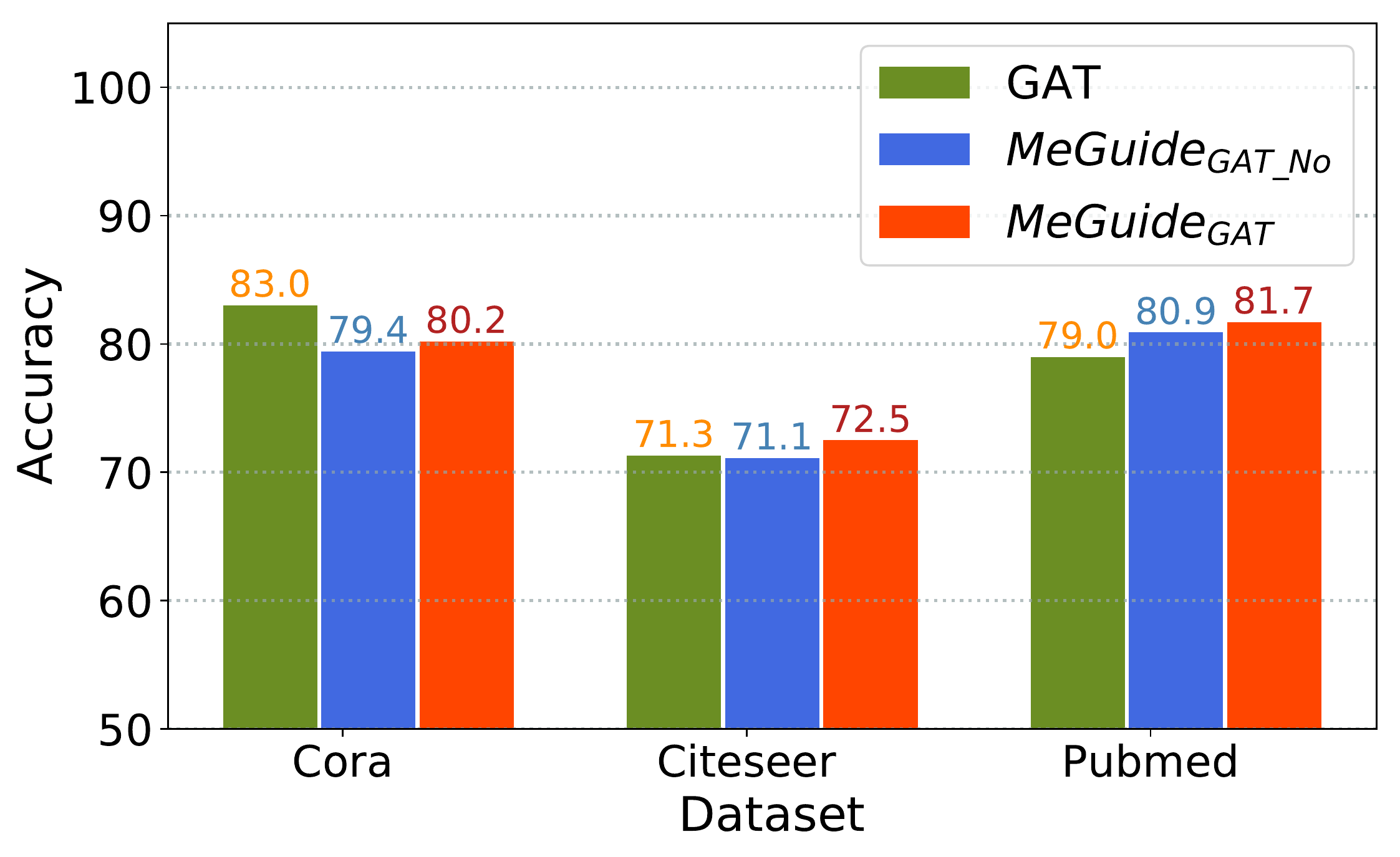}
		\end{minipage}
	}
	\caption{Ablation Study of Representation Aggregation-based Prediction.}\label{fig:ablation_study}
\end{figure}

\begin{figure*}[t]
	\centering
	\subfigure[GCN on Cora]{\label{fig:cora_memory}
		\begin{minipage}[l]{0.23\columnwidth}
			\centering
			\includegraphics[width=1\textwidth]{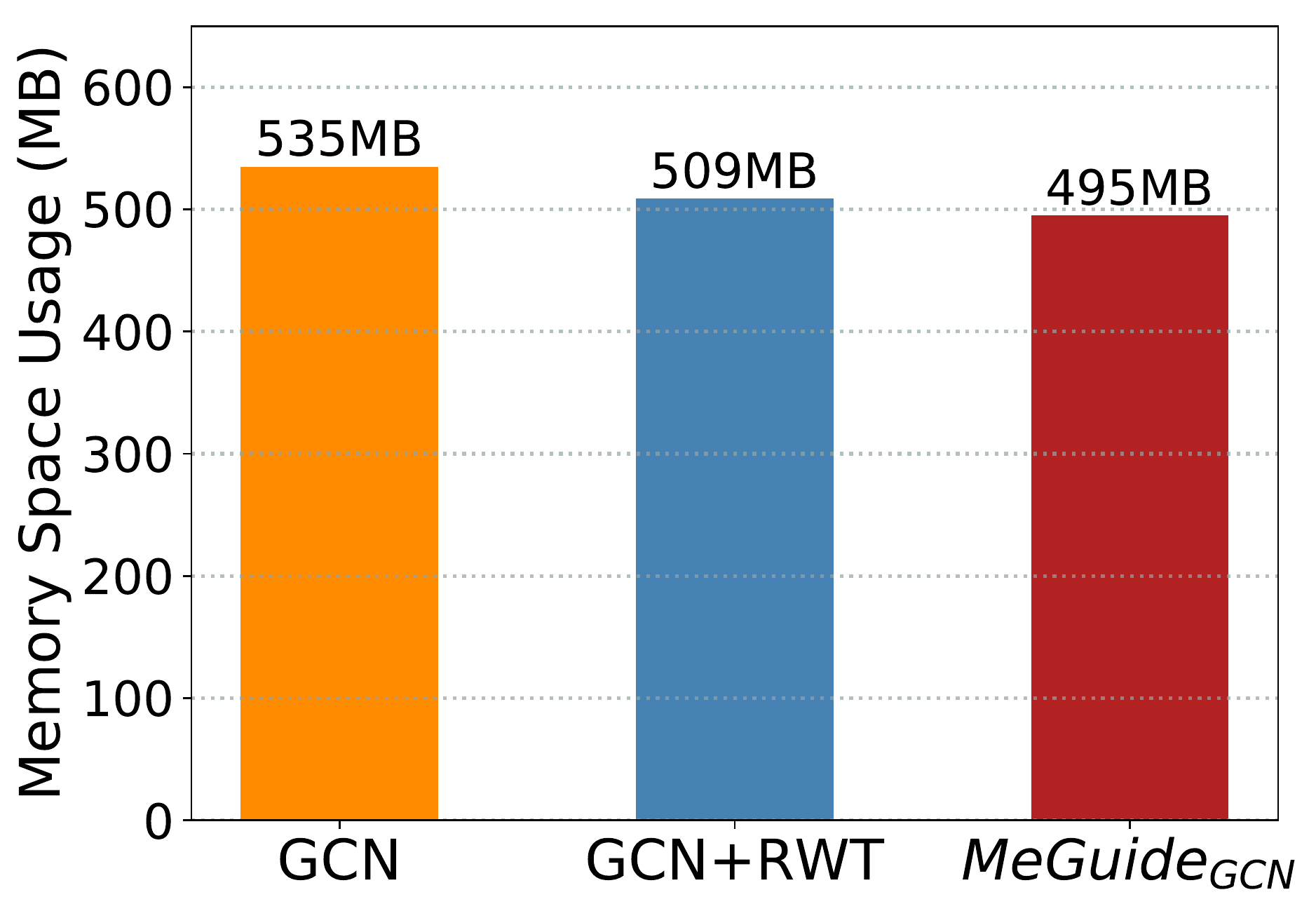}
		\end{minipage}
	}
	\subfigure[GCN on Citeseer]{\label{fig:citeseer_memory}
		\begin{minipage}[l]{0.23\columnwidth}
			\centering
			\includegraphics[width=1\textwidth]{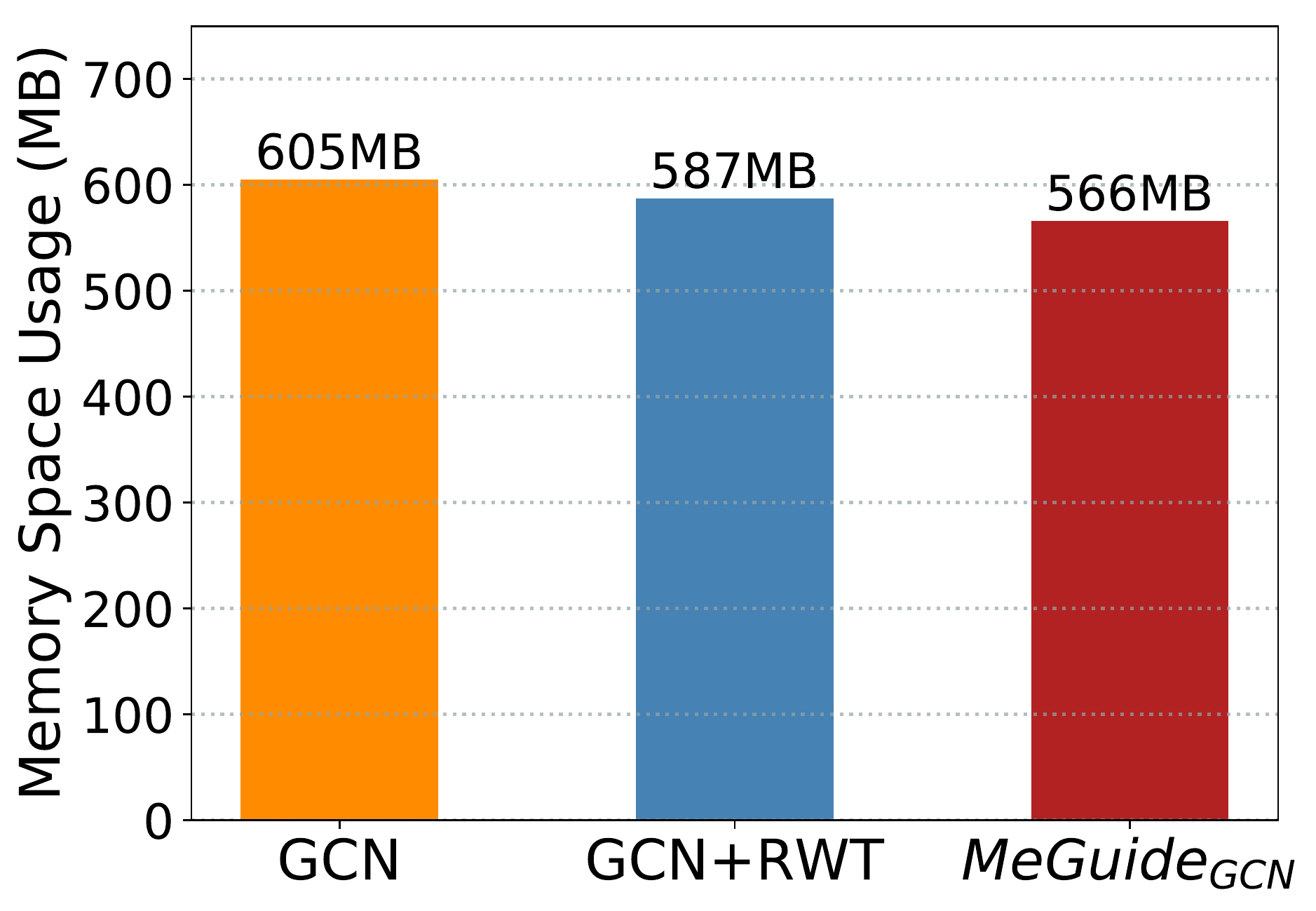}
		\end{minipage}
	}
	\subfigure[GCN on Pubmed]{\label{fig:pubmed_memory}
		\begin{minipage}[l]{0.23\columnwidth}
			\centering
			\includegraphics[width=1\textwidth]{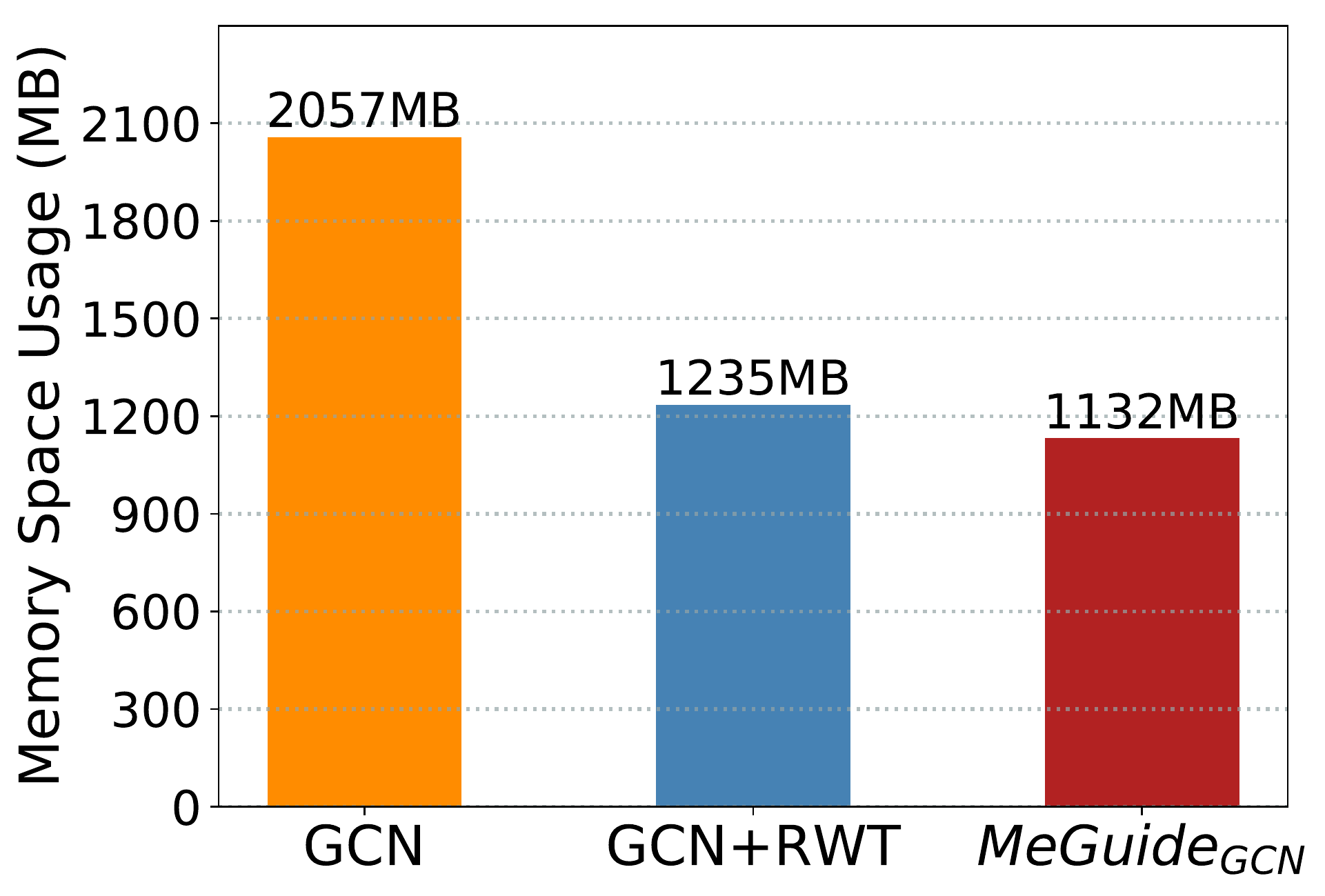}
		\end{minipage}
	}
	\subfigure[GCN on Flickr]{\label{fig:flickr_memory}
		\begin{minipage}[l]{0.23\columnwidth}
			\centering
			\includegraphics[width=1\textwidth]{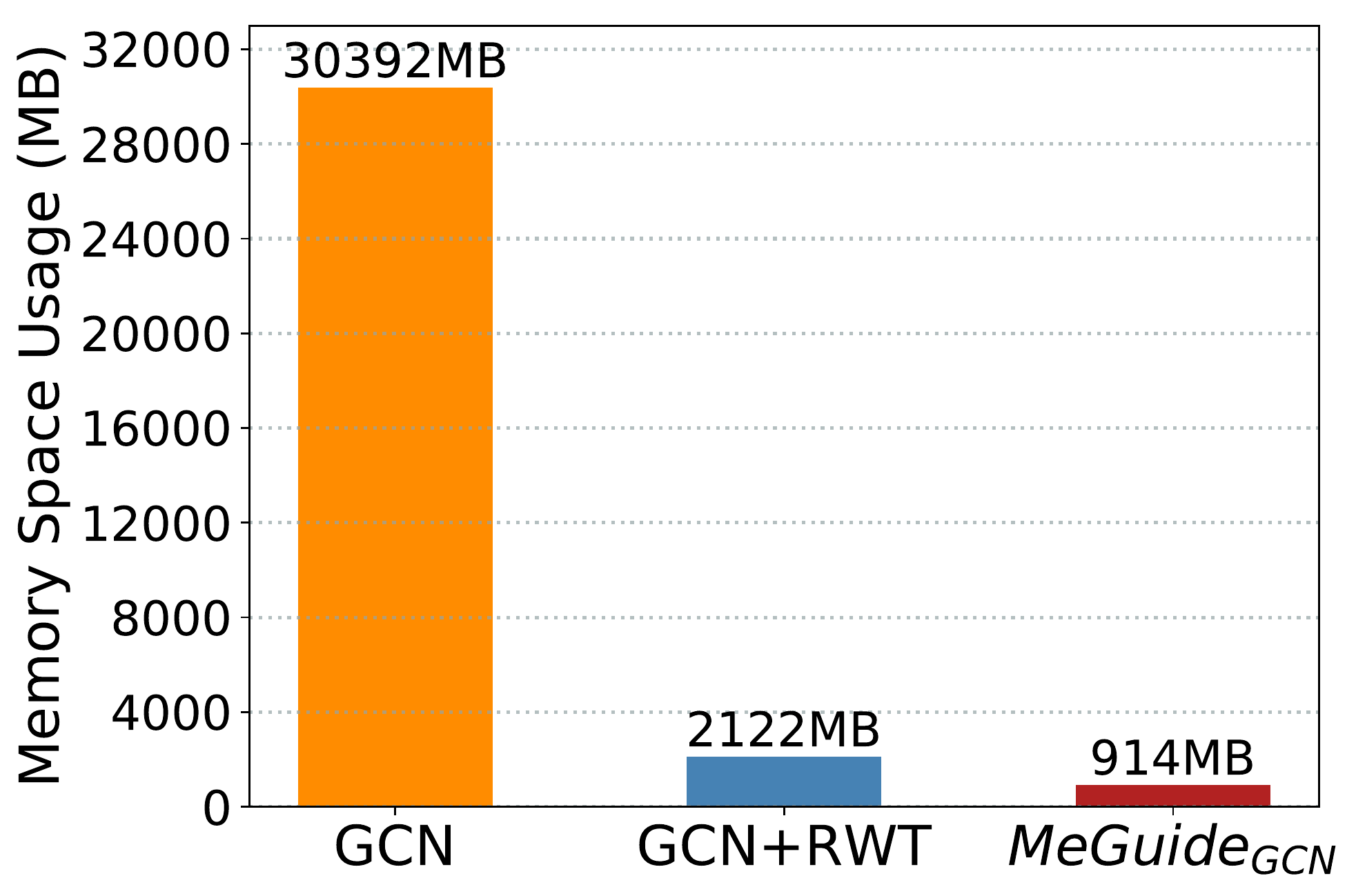}
		\end{minipage}
	}
	\subfigure[GCN on Reddit]{\label{fig:reddit_memory}
		\begin{minipage}[l]{0.23\columnwidth}
			\centering
			\includegraphics[width=1\textwidth]{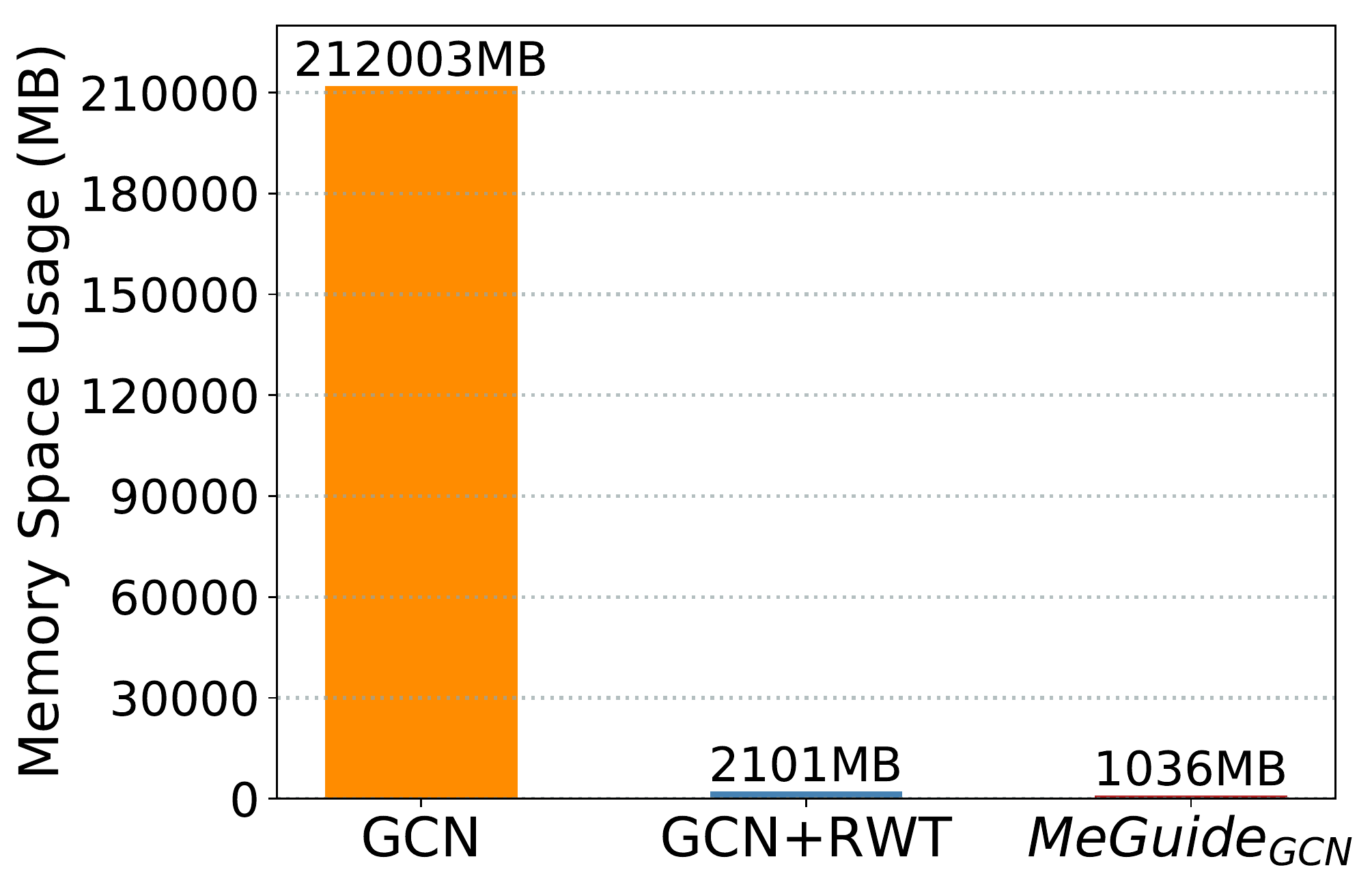}
		\end{minipage}
	}
	\subfigure[GAT on Cora]{\label{fig:cora_memory_GAT}
		\begin{minipage}[l]{0.23\columnwidth}
			\centering
			\includegraphics[width=1\textwidth]{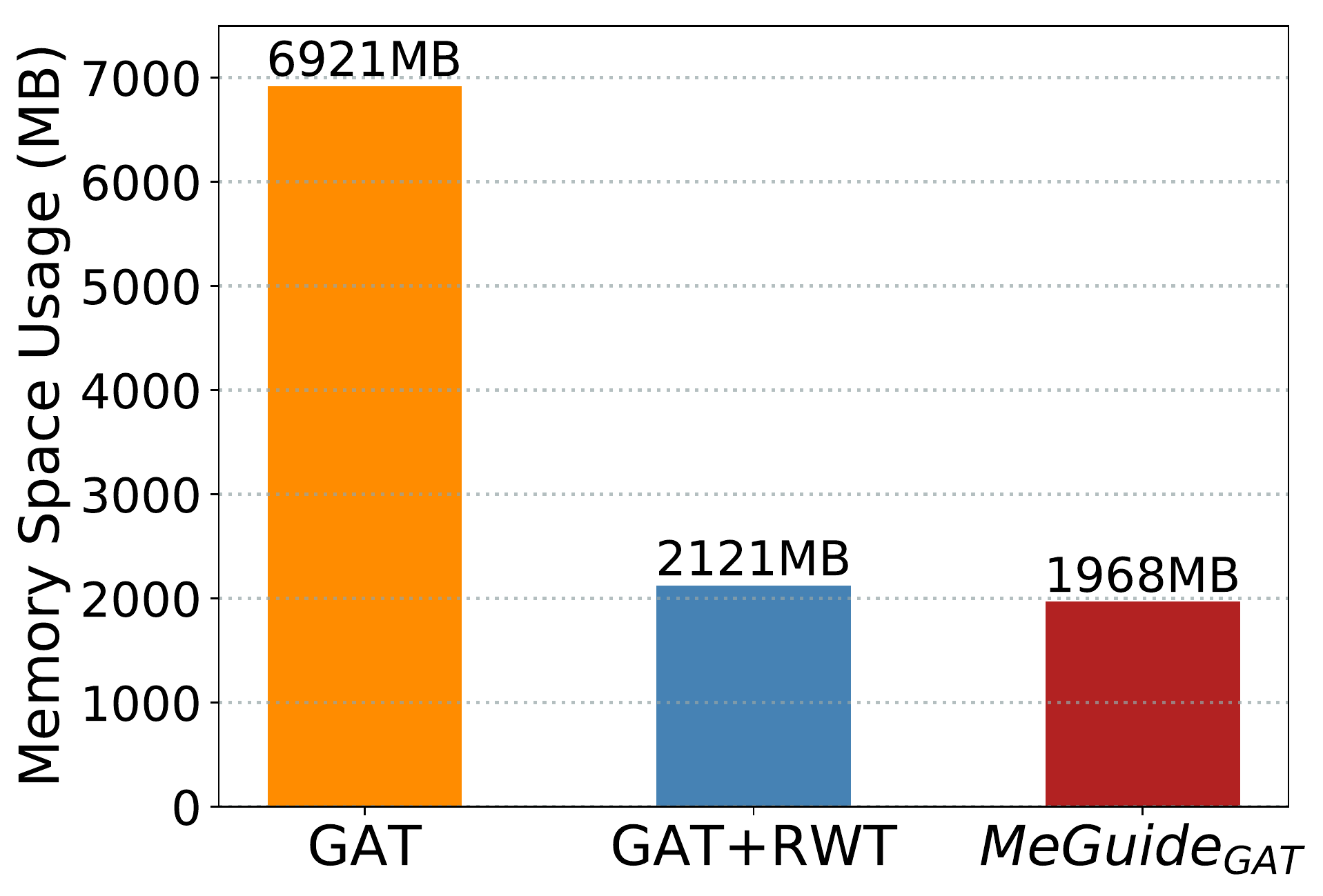}
		\end{minipage}
	}
	\subfigure[GAT on Citeseer]{\label{fig:citeseer_memory_GAT}
		\begin{minipage}[l]{0.23\columnwidth}
			\centering
			\includegraphics[width=1\textwidth]{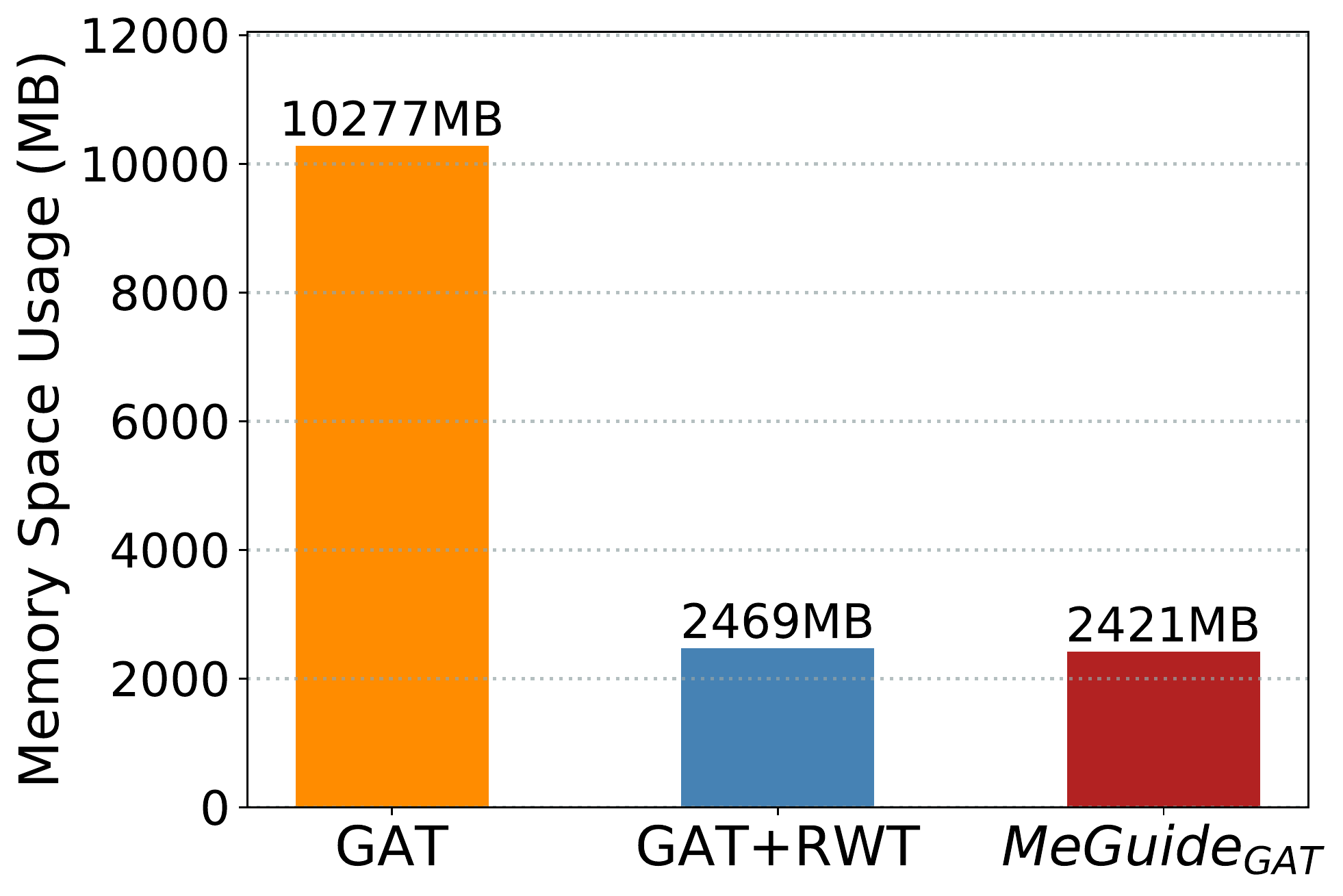}
		\end{minipage}
	}
	\subfigure[GAT on Pubmed]{\label{fig:pubmed_memory_GAT}
		\begin{minipage}[l]{0.23\columnwidth}
			\centering
			\includegraphics[width=1\textwidth]{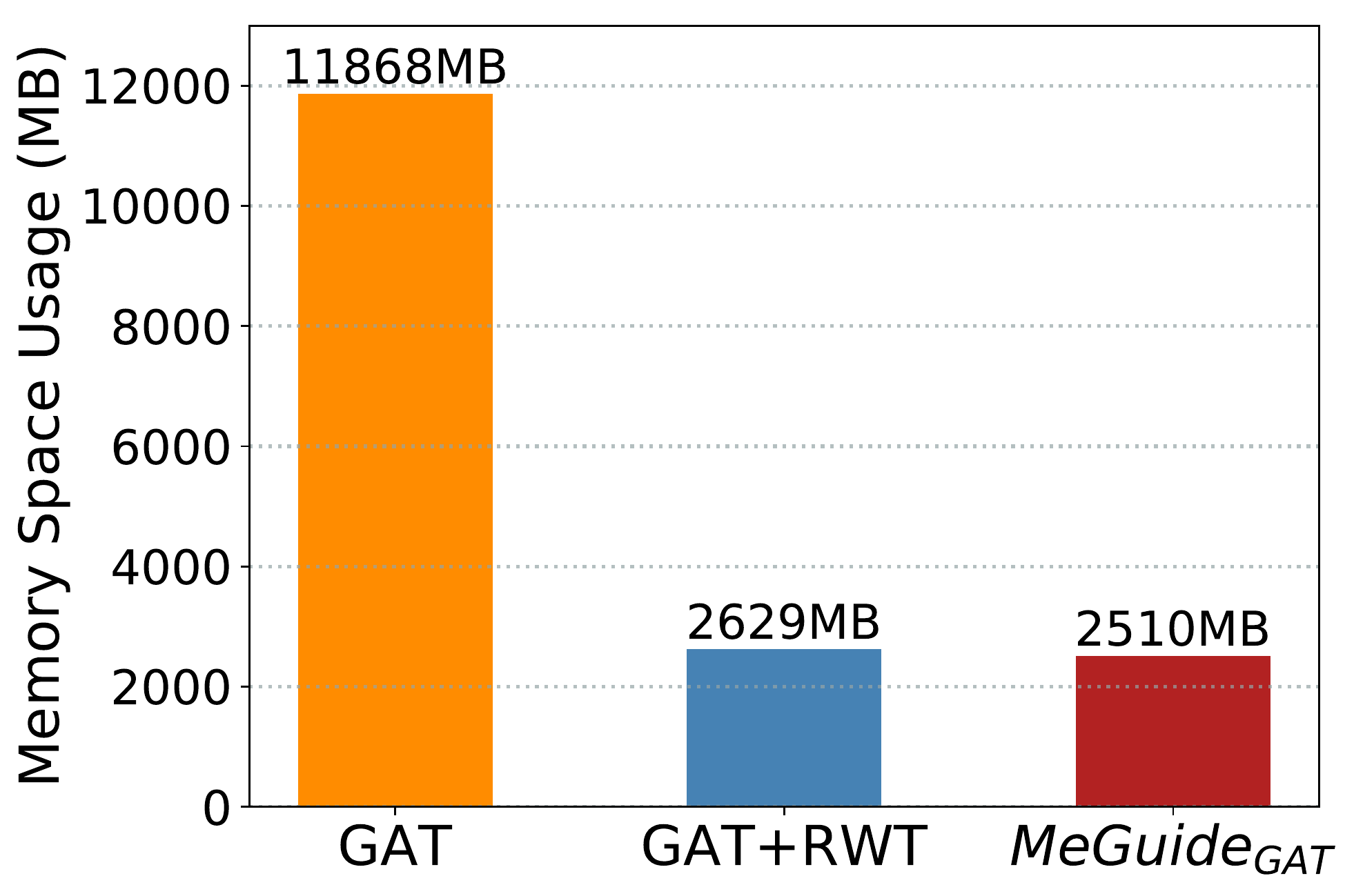}
		\end{minipage}
	}
	\subfigure[GAT on Flickr]{\label{fig:flickr_memory_GAT}
		\begin{minipage}[l]{0.23\columnwidth}
			\centering
			\includegraphics[width=1\textwidth]{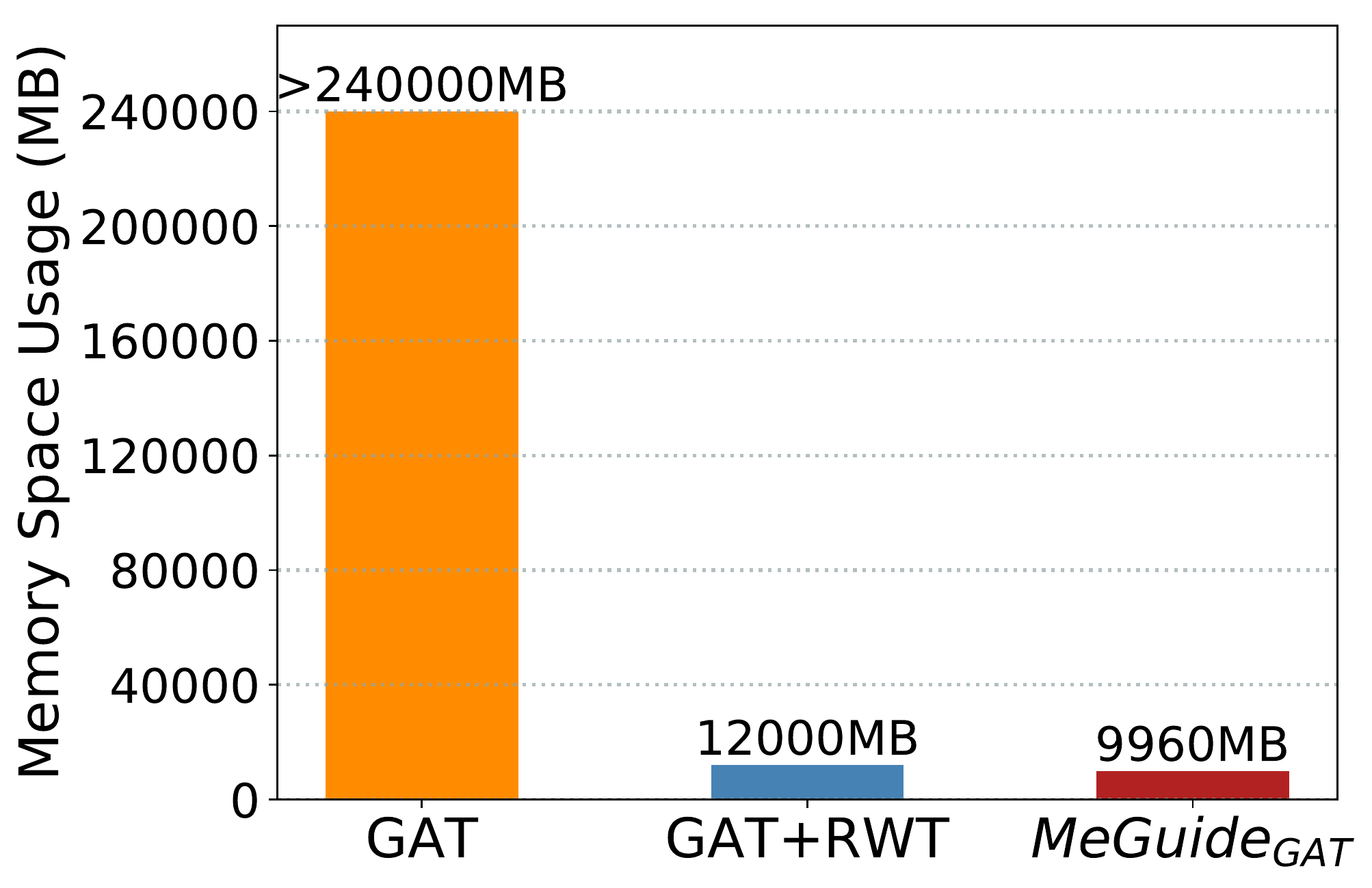}
		\end{minipage}
	}
	\subfigure[GAT on Reddit]{\label{fig:reddit_memory_GAT}
		\begin{minipage}[l]{0.23\columnwidth}
			\centering
			\includegraphics[width=1\textwidth]{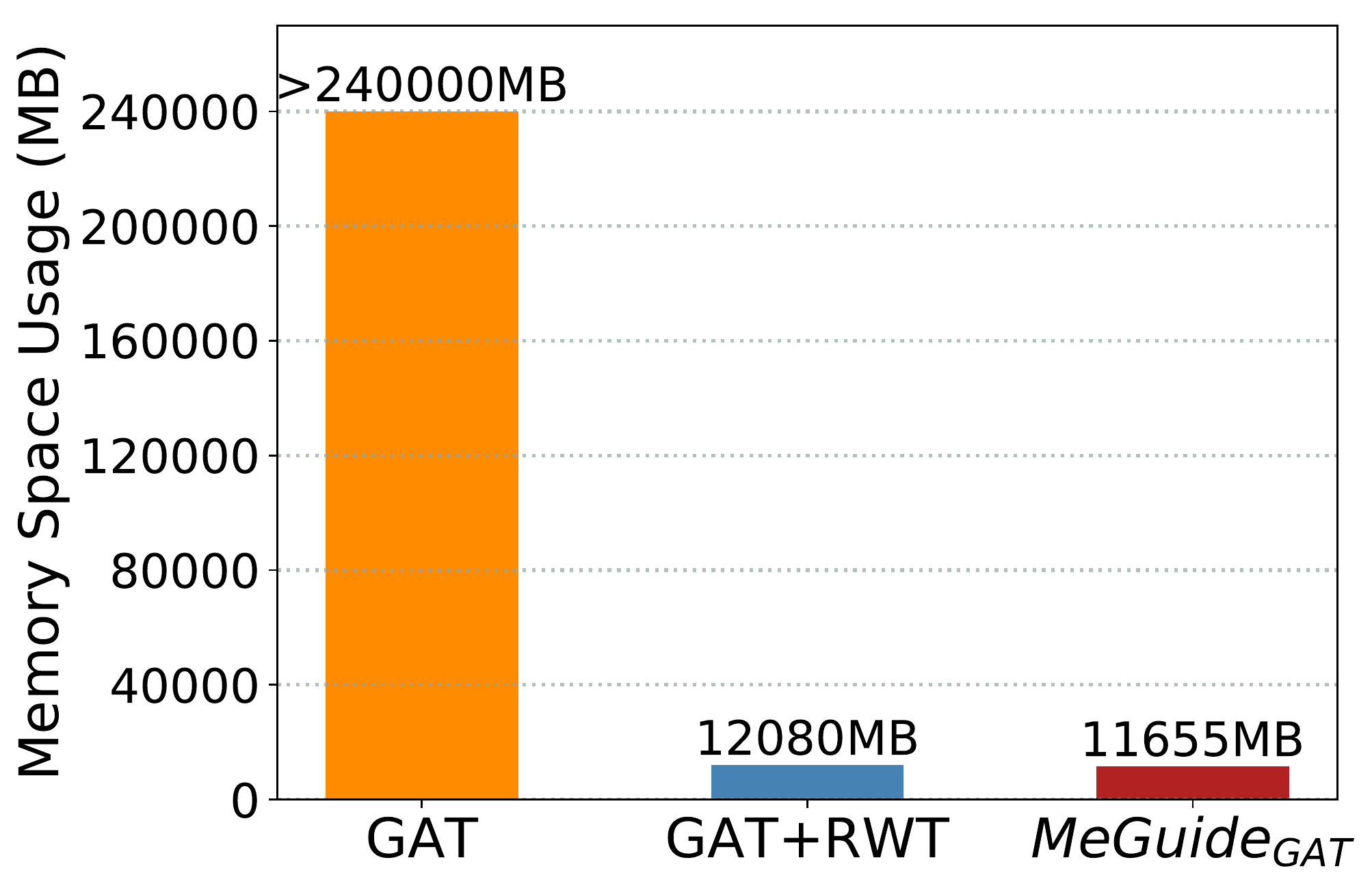}
		\end{minipage}
	}
	
	\caption{Memory Space Usage (The unit is MB) on Different Datasets.}\label{fig:memory}
\end{figure*}

\subsubsection{Overall performance analysis}
The results of both the transductive learning task and the inductive learning task are exhibited in Table~\ref{tab:results_all}. We present the accuracy of node classification on the test set. For the baseline methods evaluating the performance with F1 score in their original papers, out of the need for a consistent evaluation metric, we reimplement their models and run the experiments to report the results based on the metric accuracy. From Table~\ref{tab:results_all}, we can observe that {\our} achieves the best results on 4 out of the 5 datasets, for cases when GCN and GAT as the original GNN models. Regarding the dataset (Reddit) that {\our} does not obtain the highest accuracy, {\our}'s performance is still competitive compared to the state-of-the-art methods. Therefore, to the \textbf{Question 1} we can confidently answer that {\our} can guarantee the performance of the original GNN models while even help improves the performance of the GNNs. Apart from this, comparing to other subgraph-based methods, including Cluster-GCN, GraphSAINT, and RWT, {\our} acquires the best results on almost all datasets. This comparison also verifies the effectiveness in GNN model learning, which also illustrates the ability of {\our} to overcome the three problems encountered by GNN models from the perspective of the final result. In the following part, we will continue to discuss the performance of {\our} in dealing with the three problems separately.

To answer to the \textbf{Question 2}, we can compare the performance among different samplers.
The improvement from {\our} sampler can reach up to 10 percent compared to Random and BFS sampler (e.g., on Pubmed dataset) and 4 percent against RWT (e.g., on Flickr dataset). Such a phenomenon illustrates that the subgraphs involved by {\our} Sampler are powerful and be able to support effective training for GNN models.

To answer to the \textbf{Question 3}, we conducted a set of ablation studies. We remove the representation aggregation-based prediction in {\our} and directly input the full graph to the trained GNN models for prediction. The experimental results are shown in Figure~\ref{fig:ablation_study}, where ${\our}_{GCN\_No}$ represents that applying {\our} without the representation aggregation-based prediction to GCN. It can be seen from the comparison between GCN and ${\our}_{GCN\_No}$ that the prediction strategy based on the subgraph we designed can improve the performance of trained GNN models. The result is consistent for the comparison between GAT and ${\our}_{GAT\_No}$. Besides, the comparison between GCN and ${\our}_{GCN\_No}$ and the comparison between GAT and ${\our}_{GAT\_No}$ in the Figure~\ref{fig:ablation_study} can further demonstrate that {\our} can better train the original GNN models, and this performance improvement does not come from the representation aggregation-based prediction phase.

\subsubsection{Space-consuming Analysis}
In each experiment, we record the memory usage of our device and show the values in Figure~\ref{fig:memory}. From the figures, we can see that when using GCN as the original model, on the Cora and Citeseer dataset, the memory usage of {\our} is smaller than GCN and RWT by a little gap. GCN requires much more memory space than subgraph-based methods when it comes to relatively larger datasets (e.g., Pubmed, Flicker, and Reddit). Such a phenomenon is reasonable according to the mechanism of GNNs layer computation. Each layer of GNNs computing loads the current graph's adjacency matrix, the size of which will increase quadratically along with the increasing of graph size (i.e., the number of nodes). A similar phenomenon also occurs when GAT becomes the original model. Generally, compared to full graph-based methods, the advantage of {\our} is prominent concerning the memory cost. When comparing to RWT, {\our} keeps at the same level with RWT and has a slight advantage. Therefore, for the \textbf{Question 1}, {\our} successfully break through the memory bottleneck brought by \textit{node dependence} and \textit{neighbors explosion} (mentioned in Section~\ref{sec:intro}).

\subsubsection{Time-consuming Analysis}
In the training process, we record the convergence time of each experiment and list the results in Table~\ref{tab:time}. From the results, it is obvious that subgraph-based methods such as RWT and {\our} require much less convergence time than the full graph-based methods GCN and GAT. Such fast training convergence speed of subgraph-based methods is easy to understand. Since for the feedforward and backpropagation processes of GNNs computing, subgraph-based methods (e.g., {\our}) only involve subgraphs instead of the full graph, which limits the occurrence of \textit{node dependence} and \textit{neighbors explosion} to the scope of the subgraph.
Therefore, the computation complexity in each training iteration for subgraph-based methods is much lower than full graph-based methods. Besides, compared to RWT, the convergence of {\our} is also slightly faster, but {\our} can achieve better performance, which reflects the effectiveness and efficiency of the subgraph sampling by {\our}.

\begin{table*}[h]
	\caption{Convergence Time (The unit is second)}
	\renewcommand\arraystretch{1.1}
	\centering
	\begin{tabular}{l c c c c c}
		\hline
		\hline
		&Cora&Citeseer&Pubmed&Flickr&Reddit\\
		\hline
		{\gcn}&4.573&1.968&61.90&1161.92&25370\\
		{\gcn} + RWT&1.964&1.826&8.698&1.179&7.722\\
		{\our}$_{GCN}$&\textbf{1.784}&\textbf{1.653}&\textbf{5.112}&\textbf{1.149}&\textbf{7.262}\\
		\hline
		{\gat}&413.3&500.1&-&-&-\\
		{\gat} + RWT&71.44&47.06&139.4&68.06&2614\\
		{\our}$_{GAT}$&\textbf{63.75}&\textbf{36.88}&\textbf{109.7}&\textbf{58.30}&\textbf{2367}\\
		\hline
		\hline
	\end{tabular}
	\begin{tablenotes}
		\footnotesize
		\item``-'' insufficient memory for GPU.
	\end{tablenotes}
	\label{tab:time}
\end{table*}

\subsection{Parameter Analysis}
\begin{table*}[h]
	\caption{Metrics Value on All Datasets}
	\small
	\renewcommand\arraystretch{1.1}
	\centering
	\begin{tabular}{l  c  c  c  c  c}
		\hline
		\hline
		&Cora&Citeseer&Pubmed&Flickr&Reddit\\
		\hline
		$\lambda_f$&0.123&0.051&0.058&1002.042&775.54\\
		$\lambda_d$&8.8&6.5&6.9&5.8&3.9\\
		\hline
		\hline
	\end{tabular}
	\label{tab:metric}
\end{table*}
There are some key parameters and metric values employed by our proposed {\our}. We list the feature smoothness values $\lambda_f$ and connection failure distance $\lambda_d$ of all datasets in Table~\ref{tab:metric}. On Cora, Citeseer, and Pubmed datasets, the feature smoothness value is relatively small, illustrating that the nodes in the whole graph share more similar features. The subgraph sampling strategy of {\our} is based on the values in the table. There is another important hyper-parameter, the feature smoothness hyper-parameter $\rho$, which determines the expansion condition of {\our} sampler. We tune different values of $\rho$ and exhibit the results in Figure~\ref{fig:rho}. The trend of results indicates that {\our} achieves the highest performance when $\rho$ locates in the range of 0.2 to 0.5. When $\rho$ is too small (e.g., $\rho = 0.1$) or becomes larger, the performance worsens. Such a phenomenon matches our expectation, because on the one hand, when $\rho$ is too small, {\our} is not able to filter nodes with less information gain during the sampling process of {\our} sampler. On the other hand, when $\rho$ is too large, the expansion step of {\our} sampler cannot work anymore: only a few neighboring nodes of the already selected nodes are qualified for being sampled.

\section{Conclusion}
\label{sec:conclusion}
In this paper, we propose a general framework {\our} for optimizing the training and prediction of GNN models. In {\our}, we design the subgraph-based training to deal with three non-trivial problems (\textit{neighbors explosion}, \textit{node dependence}, and \textit{oversmoothing}) bothering many GNN models. Different from the existing subgraph-based training methods, we define two metrics that can be used to measure the performance gain of GNN models to guide the sampling of subgraphs. The training effect of the GNN models can be improved through more effective subgraphs. In addition, for the case of the memory bottleneck when using trained GNN models to predict on a single large graph, {\our} provides a solution for prediction based on subgraphs, which is an unsolved problem left by existing subgraph-based training methods. Extensive experiments on 5 benchmark graph datasets and 2 widely used GNN models demonstrate the effectiveness of {\our}, where GNN models not only achieve the comparative or even better performance but less training time and device memory space is required. 

%
%
%

\bibliography{refs}

\begin{biography}[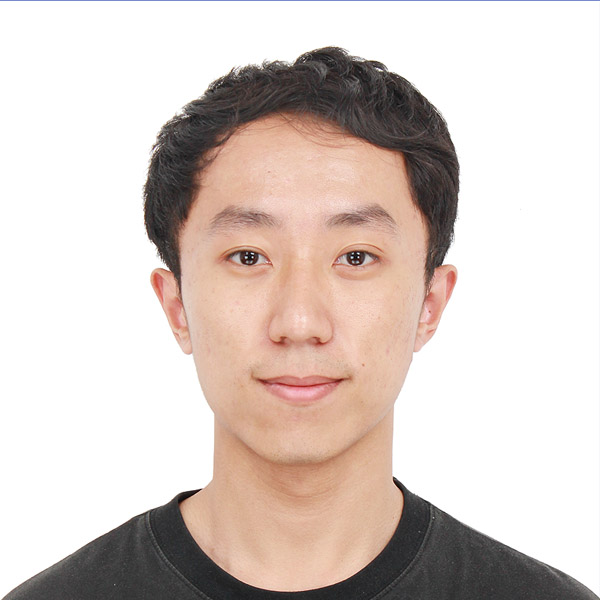]{Jiyang Bai}
	received the bachelor degree in information and numerical science from Nankai University, China, in 2018. He is pursuing a PhD degree in the Department of Computer Science at the Florida State University. His main research areas are data mining and machine learning, especially focus on the graph mining, graph neural networks, graph similarity search.    \\
\end{biography}

\begin{biography}[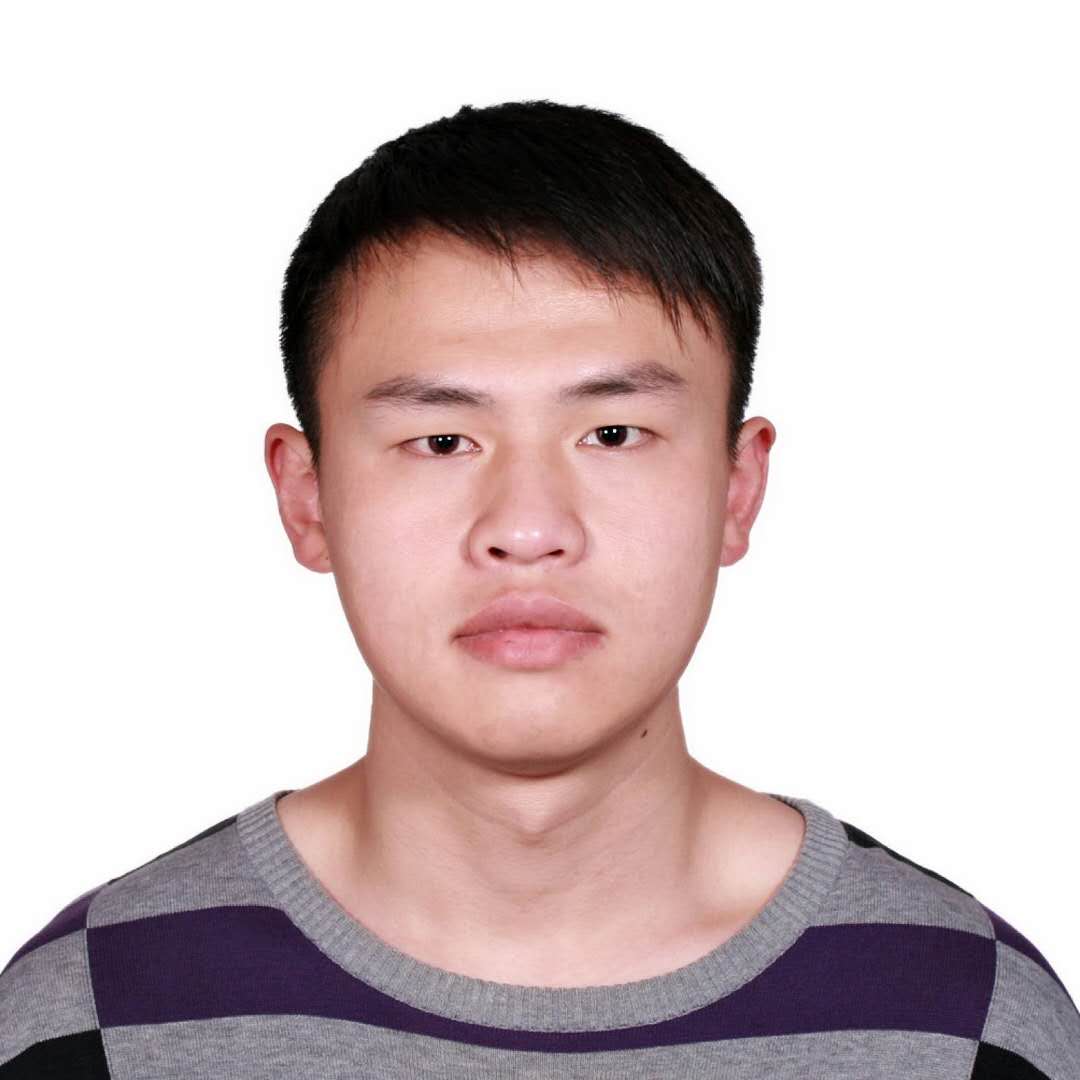]{Yuxiang Ren}
received the bachelor degree in
software engineering and the bachelor degree in law from Nanjing University, China,
in 2015, and the Ph.D. degree in Computer Science from Florida State University in 2021. His main research areas are data
mining and machine learning, especially focus on the development and analysis of algorithms for social and information networks, as well as heterogeneous graph mining and fake news detection. 
\end{biography}

\begin{biography}[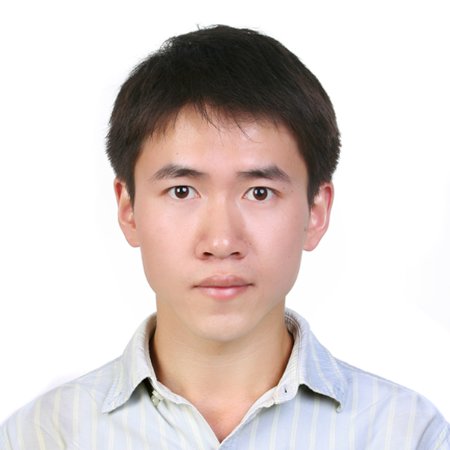]{Jiawei Zhang}
	received the bachelor’s degree in
	computer science from Nanjing University, China,
	in 2012, and the Ph.D. degree in computer science
	from the University of Illinois at Chicago, USA, in
	2017. He has been an Assistant Professor with the
	Department of Computer Science, Florida State
	University, Tallahassee, FL, USA, since 2017.
	He founded IFM Lab in 2017, and has been working as the director since then. IFM Lab is a research oriented academic laboratory, providing the latest information on fusion learning and data mining research works and application tools to both academia and industry.
\end{biography}

\end{document}